\BODY\end{multline}}}
\BODY\end{multline*}}}
\newcommand{\NarrowLineBreak}{\iftoggle{CONF}{}{\\}}
\newcommand{\red}[1]{\textcolor{red}{#1}}
\newtheorem{theorem}{Theorem}
\newtheorem{lemma}[theorem]{Lemma}
\newtheorem{corollary}[theorem]{Corollary}
\theoremstyle{definition}
\theoremstyle{remark}
\newtheorem{remark}[theorem]{Remark}
\newcommand{\Mod}[1]{\ (\mathrm{mod}\,#1)}
\newcommand{\dd}{\,\mathrm{d}}  
\renewcommand{\d}{\mathrm{d}}   
\newcommand{\e}{\mathrm{e}}    
\newcommand{\ER}{\mathcal{E}^{\mathrm{R}}}  
\newcommand{\EX}{\mathcal{E}^{\mathrm{X}}}  
\newcommand{\PR}{\mathcal{P}^{\mathrm{R}}}  
\newcommand{\PX}{\mathcal{P}^{\mathrm{X}}}  
\newcommand{\EtR}{\mathcal{E}^{\mathrm{tR}}} 
\newcommand{\EtX}{\mathcal{E}^{\mathrm{tX}}} 
\newcommand{\PtR}{\mathcal{P}^{\mathrm{tR}}} 
\newcommand{\PtX}{\mathcal{P}^{\mathrm{tX}}} 
\newcommand{\EiR}{\mathcal{E}^{\mathrm{iR}}} 
\renewcommand{\i}{^{\mathrm{i}}}  
\renewcommand{\t}{^{\mathrm{t}}}  
\newcommand{\X}{^{\mathrm{X}}}   
\newcommand{\tX}{^{\mathrm{tX}}} 
\newcommand{\EEE}{\mathcal{E}} 
\title{Universality of conformal prediction under the assumption of randomness}
\author{Vladimir Vovk}
\begin{document}
\maketitle
\begin{abstract}%
  Conformal predictors provide set or functional predictions
  that are valid under the assumption of randomness,
  i.e., under the assumption of independent and identically distributed data.
  The question asked in this paper is whether there are predictors
  that are valid in the same sense under the assumption of randomness
  and that are more efficient than conformal predictors.
  The answer is that the class of conformal predictors is universal
  in that only limited gains in predictive efficiency are possible.
  The previous work in this area has relied on the algorithmic theory of randomness
  and so involved unspecified constants,
  whereas this paper's results are much more practical.
  They are also shown to be optimal in some respects.

  The version of this paper at \url{http://alrw.net} (Working Paper 43)
  is updated most often.
\end{abstract}

\section{Introduction}
\label{sec:introduction}

The main assumption of machine learning is that of \emph{randomness},
i.e., the assumption that the observations
are independent and identically distributed (IID).
The method of conformal prediction \cite{Angelopoulos/etal:arXiv2411,Vovk/etal:2022book}
allows us to complement predictions output by standard machine-learning algorithms
by some measures of confidence in those predictions
that are valid under the assumption of randomness.
However, conformal prediction only uses the assumption of exchangeability,
which is weaker than randomness.
A natural question is how limiting this is;
does conformal prediction lose much
by not using the full strength of the assumption of randomness
and only using exchangeability?
The main message of this paper
and the earlier paper by Nouretdinov et al.\ \cite{Nouretdinov/etal:2003ALT}
is that the answer is ``no'' (we don't lose much),
at least in the first approximation.
In the terminology of \cite{Nouretdinov/etal:2003ALT},
conformal prediction is universal.

The main limitation of the pioneering paper \cite{Nouretdinov/etal:2003ALT}
is that it is based on the algorithmic theory of randomness.
Because of that, Nouretdinov et al.'s results involve unspecified constants
and, therefore, are never applicable to practical machine learning.
This paper's results are more precise in several respects,
but the main one is that they only involve fully specified constants
and, therefore, open up the possibility of quantifying limitations of conformal predictors.
Other advances of this paper as compared with \cite{Nouretdinov/etal:2003ALT}
are that our results are not restricted to the case of classification,
but in the case of classification they are stronger
and are complemented by optimality results.

To demonstrate that conformal prediction does not lose much
by not using the full strength of the assumption of randomness,
this paper introduces the most general class of predictors,
``randomness predictors'',
which produce predictions of the same kind as conformal predictors
that are only required to be valid under the assumption of randomness.
There are many more randomness predictors than conformal predictors,
and our question is whether conformal predictors are as good
as arbitrary randomness predictors.
The answer given in this paper is a qualified yes:
every randomness predictor $P$ can modelled by a conformal predictor $P'$
so that the predictions output by $P'$ are almost as good as those output by $P$.
It is unclear whether the difference between conformal and randomness prediction
can be usefully exploited at all\iftoggle{CONF}{}{ \cite{Vovk:arXiv2503}}.

One simplifying assumption made in this paper
is that it concentrates on predictors that are \emph{train-invariant},
i.e., invariant w.r.\ to permutations of the training sequence.
(We will remove this assumption only in Appendix~\ref{app:general}.)
The assumption of train-invariance for predictors
is extremely natural under the assumption of randomness for the data.
It is reflected in the standard expression for a training sequence
being ``training set'' in machine learning;
since ``set'' implies the lack of order,
this expression is only justified for train-invariant predictors
(and even in this case it is not justified completely;
it would have been more accurate to say ``training bag'').
In fact, conformal predictors can be defined as train-invariant predictors
that are valid under exchangeability
\cite[Proposition~1]{Nouretdinov/etal:2003ALT}.
This will be discussed in detail in Sect.~\ref{sec:definitions},
where we will also see that the requirement of train-invariance
is justified by general principles of statistical inference.

Conformal prediction is usually presented as a method of \emph{set prediction}
\cite[Part~I]{Vovk/etal:2022book},
i.e., as a way of producing prediction sets (rather than point predictions).
Another way to look at a conformal predictor is as a way of producing a p-value function
(discussed, in a slightly different context, in, e.g., \cite{Fraser:2019}),
which is a function mapping each possible label $y$ of a test object
to the corresponding conformal p-value.
In analogy with ``prediction sets'', we may call such p-value functions
``prediction functions''.
The prediction set $\Gamma^{\alpha}$ corresponding to a prediction function $f$
and a significance level $\alpha\in(0,1)$
(our target probability of error)
is the set of all labels $y$ such that $f(y)>\alpha$.
A standard property of validity for conformal predictors
is that $\Gamma^{\alpha}$ makes an error (fails to cover the true label)
with probability at most $\alpha$;
it is implied by the conformal p-values being bona fide p-values
(under suitable assumptions, such as data exchangeability).

To establish connections between conformal and randomness predictors
we will use conformal e-predictors \cite{Vovk:2025PR-local},
which are obtained by replacing p-values with e-values
(for the definition of e-values, see, e.g., \cite{Grunwald/etal:2024,Ramdas/Wang:book,Vovk/Wang:2021}
or Sect.~\ref{sec:definitions} below).
Conformal e-predictors output e-value functions $f$ as their prediction functions.
Such functions $f$ can also be represented in terms of the corresponding prediction sets
$\Gamma^{\alpha}:=\{y\mid f(y)<\alpha\}$,
where $\alpha\in(0,\infty)$ is the significance level
(notice that now we exclude the labels with large e-values from the prediction set,
which is opposite to what we did for p-values).
However, the property of validity of conformal e-predictors
is slightly more difficult to state in terms of prediction sets:
now validity means that the integral of the probability of error for $\Gamma^{\alpha}$
over $\alpha\in(0,\infty)$ does not exceed 1
\cite[end of Appendix~B]{Vovk:2025PR-local}.
This implies that the probability of error for $\Gamma^{\alpha}$
is at most $1/\alpha$,
but this simple derivative property of validity is much weaker.

Conformal e-predictors are not only a useful technical tool,
but we can also use them for prediction directly.
In Shafer's opinion \cite{Shafer:2021},
e-values are even more intuitive than p-values.
Because of the importance of e-predictors,
in the rest of this paper we will use the word ``predictor''
in combinations such as ``conformal predictor'' and ``randomness predictor''
generically, including both p-predictors (standard predictors based on p-values)
and e-predictors (predictors based on e-values);
in particular, we will never drop ``p-'' in ``p-predictor''.
This is a potential source of confusion,
and the reader should keep in mind that the usual notion of conformal predictors
corresponds to conformal p-predictors in this paper.

\begin{figure}[bt]
  \begin{center}
    \unitlength 0.50mm
    \begin{picture}(50,50)(-10,-10)  
      \thinlines
      \put(34,0){\vector(-1,0){28}}  
      \put(40,5){\line(0,1){30}}     
      \put(34,40){\vector(-1,0){28}} 
      \put(0,5){\line(0,1){30}}      
      \put(-6,35){\framebox(12,10)[cc]{$\EtR$}}  
      \put(-6,-5){\framebox(12,10)[cc]{$\PtR$}}  
      \put(34,-5){\framebox(12,10)[cc]{$\PtX$}}  
      \put(34,35){\framebox(12,10)[cc]{$\EtX$}}  
    \end{picture}
  \end{center}
  \caption{A square representing the main classes of predictors considered in this paper:
    $\EtR$ (train-invariant randomness e-predictors),
    $\EtX$ (conformal e-predictors),
    $\PtR$ (train-invariant randomness p-predictors),
    $\PtX$ (conformal p-predictors).%
    \label{fig:simple}}
\end{figure}

We start in Sect.~\ref{sec:definitions} from the main definitions,
including those of conformal and randomness predictors.
The main classes of predictors that we are interested in in this paper
are shown in Figure~\ref{fig:simple}.
All four classes are train-invariant.
An arrow going from class $A$ to class $B$ means embedding:
$A\subseteq B$.
Section~\ref{sec:results} is devoted to the main results,
first for e-predictors in Subsect.~\ref{subsec:e}
and then for p-predictors in Subsect.~\ref{subsec:p}.
A line between classes $A$ and $B$ in Figure~\ref{fig:simple}
means the possibility of transformations
from $f\in A$ to $f'\in B$ and vice versa;
such transformations, called ``calibration'',
will be discussed at the beginning of Subsect.~\ref{subsec:p}
and will serve as a way of deducing results for p-predictors
from those for e-predictors.
In the two subsections,
we establish the predictive efficiency of conformal predictors
among randomness predictors in both e- and p-versions.
Namely, the prediction functions for conformal predictors turn out to be competitive on average
with the prediction functions for any randomness predictors,
where ``on average'' refers to an arbitrary probability measure that can depend on the test example.
These results are illustrated on the simple case of binary classification.
Section~\ref{sec:classification} gives applications to multi-class classification,
and Sect.~\ref{sec:conclusion} concludes listing some limitations of our results
and directions of further research.

Our notation for the base of natural logarithms will be $\e\approx2.72$
(while italic $e$ will often serve as a generic notation for e-values).

\section{Definitions}
\label{sec:definitions}

This paper deals with the following prediction problem.
We are given a training sequence of \emph{examples} $z_i=(x_i,y_i)$,
$i=1,\dots,n$ for a fixed $n$,
each consisting of an \emph{object} $x_i$ and its \emph{label} $y_i$,
and a new test object $x_{n+1}$;
the task is to predict $x_{n+1}$'s label $y_{n+1}$.
A potential label $y$ for $x_{n+1}$ is \emph{true} if $y=y_{n+1}$
and \emph{false} otherwise.
The objects are drawn from a non-empty measurable space $\mathbf{X}$,
the \emph{object space},
and the labels from the \emph{label space} $\mathbf{Y}$,
which is assumed to be a non-trivial measurable space
(meaning that the $\sigma$-algebra on it
is different from $\{\emptyset,\mathbf{Y}\}$).

A measurable function $P:\mathbf{Z}^{n+1}\to[0,1]$
is a \emph{randomness p-variable} if,
for any probability measure $Q$ on $\mathbf{Z}$
and any \emph{significance level} $\alpha\in(0,1)$,
$Q^{n+1}(\{P\le\alpha\})\le\alpha$.
Such a function $P$ is an \emph{exchangeability p-variable}
if $R(\{P\le\alpha\})\le\alpha$
for any exchangeable probability measure $R$ on $\mathbf{Z}^{n+1}$
and any $\alpha\in(0,1)$.
And a real-valued function $P$ defined on $\mathbf{Z}^{n+1}$
is \emph{train-invariant}
if it is invariant w.r.\ to permutations of the training examples:
\begin{equation*} 
  P(z_{\sigma(1)},\dots,z_{\sigma(n)},z_{n+1})
  =
  P(z_{1},\dots,z_n,z_{n+1})
\end{equation*}
for each data sequence $(z_1,\dots,z_{n+1})\in\mathbf{Z}^{n+1}$
and each permutation $\sigma$ of $\{1,\dots,n\}$.
In other words,
train-invariant functions should depend
on the training examples $z_1,\dots,z_n$ only via
the training bag $\lbag z_1,\dots,z_n\rbag$.
Finally, $P$ is a \emph{conformal p-variable}
if it is a train-invariant exchangeability p-variable.

\begin{remark}
  The difference between the assumptions of randomness and exchangeability
  disappears for infinite data sequences
  under mild assumptions about the example space $\mathbf{Z}$
  (it is required to be Borel).
  This follows from de Finetti's theorem,
  which represents exchangeable probability measures
  as integral mixtures of product probability measures $Q^{\infty}$.
  The difference becomes essential for finite data sequences;
  see, e.g., \cite[Sections~2.9.1 and~A.5.1]{Vovk/etal:2022book}.
\end{remark}

We will sometimes refer to the values taken by p-variables
as \emph{p-values},
and our notation for the classes of all randomness,
train-invariant randomness, exchangeability, and conformal p-variables
will be $\PR$, $\PtR$, $\PX$, and $\PtX$, respectively.

Conformal p-variables can be used for prediction,
and we will also refer to them as \emph{conformal p-predictors}
(they are usually called simply ``conformal predictors''
\cite{Angelopoulos/etal:arXiv2411,Vovk/etal:2022book}).
There are several ways to package the output of conformal p-predictors,
as discussed in Sect.~\ref{sec:introduction}.
One is in terms of set prediction:
for each significance level $\alpha\in(0,1)$,
each training sequence $z_1,\dots,z_n$, and each test object $x_{n+1}$,
we can output the \emph{prediction set}
\begin{equation}\label{eq:Gamma}
  \Gamma^{\alpha}
  :=
  \{
    y\in\mathbf{Y}
    \mid
    P(z_1,\dots,z_n,x_{n+1},y)
    >
    \alpha
  \}.
\end{equation}
By the definition of a conformal p-variable,
under the assumption of exchangeability,
the probability that a conformal p-predictor makes an error
at significance level $\alpha$,
i.e., the probability of $y_{n+1}\notin\Gamma^{\alpha}$,
is at most $\alpha$.

Instead of predicting with one prediction set
in the family \eqref{eq:Gamma},
in this paper we prefer to package our prediction
as the \emph{prediction function}
\begin{equation}\label{eq:p-f}
  f(y)
  :=
  P(z_1,\dots,z_n,x_{n+1},y),
  \quad
  y\in\mathbf{Y}.
\end{equation}
We may refer to this mode of prediction as \emph{functional prediction}.
The step from set prediction to functional prediction
is analogous to the step from confidence intervals to p-value functions
(see, e.g., \cite[Sect.~9]{Miettinen:1985} and \cite{Fraser:2017,Fraser:2019}
for the latter).

\begin{remark}\label{rem:equivalent}
  There are several equivalent definitions of conformal p-predictors,
  and the definition as train-invariant exchangeability p-variables
  (first given in \cite[Proposition~1]{Nouretdinov/etal:2003ALT})
  is one of them.
  Let us check that it is equivalent to, e.g., the definition of conformal p-values
  given in \cite[(2.20)]{Vovk/etal:2022book}
  for a fixed length $n$ of the training sequence.
  In one direction, it is obvious that conformal p-values are train-invariant
  and valid under exchangeability.
  On the other hand, given a train-invariant exchangeability p-variable $P$,
  we can define the nonconformity measure
  \[
    A(\lbag z_1,\dots,z_{n+1}\rbag,z_i)
    :=
    1/P(z_{i+1},\dots,z_{n+1},z_{1},\dots,z_i),
  \]
  and the resulting conformal p-values will dominate $P$
  (dominate in the sense of being less than or equal to).
\end{remark}

\begin{remark}
  The term ``functional prediction'' is a straightforward modification of ``set prediction''
  and ``p-value function'',
  but its disadvantage is that it is easy to confuse with function prediction,
  namely predicting a function
  (e.g., a biological function, such as that of a protein, or a mathematical function).
\end{remark}

Similarly, we can use randomness p-variables for prediction,
and then we refer to them as \emph{randomness p-predictors}.
By definition,
the probability that the prediction set \eqref{eq:Gamma}
derived from a randomness p-predictor
makes an error is at most $\alpha$,
this time under the assumption of randomness.
We will use the prediction functions \eqref{eq:p-f}
for randomness p-predictors as well.
Less important,
we call exchangeability p-variables exchangeability p-predictors.

Two standard desiderata for conformal, and by extension randomness, predictors
are their validity and efficiency.
In terms of the prediction function $f$,
validity concerns the value $f(y_{n+1})$ of $f$ at the true label
(the typical values should not be too small in p-prediction),
and efficiency concerns the values $f(y)$ at the false labels $y\ne y_{n+1}$
(they should be as small as possible in p-prediction).
Validity is automatic under randomness
(and even under exchangeability for conformal predictors),
and in this paper we are interested in the efficiency of conformal predictors
relative to other randomness predictors.
Later in the paper (Theorem~\ref{thm:Kolmogorov} and Corollary~\ref{cor:p-t} below)
we will establish efficiency guarantees
for conformal prediction in terms of randomness prediction.

A nonnegative measurable function $E:\mathbf{Z}^{n+1}\to[0,\infty]$
is a \emph{randomness e-variable} if $\int E\dd Q^{n+1}\le1$
for any probability measure $Q$ on $\mathbf{Z}$.
It is an \emph{exchangeability e-variable} if $\int E\dd R\le1$
for any exchangeable probability measure $R$ on $\mathbf{Z}^{n+1}$.
We will denote the classes of all randomness and exchangeability e-variables
by $\ER$ and $\EX$, respectively.
The class of all measurable functions $E:\mathbf{Z}^{n+1}\to[0,\infty]$
is denoted by $\EEE$.
It is easy to see that $E\in\EEE$ belongs to $\EX$ if and only if,
for any data sequence $z_1,\dots,z_{n+1}$,
\begin{equation}\label{eq:EX}
  \frac{1}{(n+1)!}
  \sum_{\pi}
  E(z_{\pi(1)},\dots,z_{\pi(n+1)})
  \le
  1,
\end{equation}
$\pi$ ranging over the permutations of $\{1,\dots,n+1\}$.

The class $\EtX$ of \emph{conformal e-variables} consists of all functions $E\in\EX$
that are train-invariant.
We often regard the randomness e-variables $E\in\ER$
as \emph{randomness e-predictors}
and conformal e-variables $E\in\EtX$
as \emph{conformal e-predictors}.
Similarly to \eqref{eq:p-f},
they output prediction functions
\begin{equation*} 
  f(y)
  :=
  E(z_1,\dots,z_n,x_{n+1},y),
  \quad
  y\in\mathbf{Y}.
\end{equation*}

\begin{remark}
  Similarly to Remark \ref{rem:equivalent},
  it is easy to check that a train-invariant exchangeability e-variable
  is the same thing as a conformal e-predictor
  as defined in, e.g., \cite{Vovk:2025PR-local}.
  Indeed, the nonconformity e-measure \cite[Sect.~2]{Vovk:2025PR-local}
  corresponding to a train-invariant exchangeability e-variable $E$ is
  \begin{wideformula*}
    A(z_1,\dots,z_{n+1})
    :=
    \bigl(
      E(z_2,\dots,z_{n+1},z_1),
      E(z_3,\dots,z_{n+1},z_1,z_2),
      \dots,\NarrowLineBreak
      E(z_1,z_2,\dots,z_{n+1})
    \bigr).
  \end{wideformula*}
\end{remark}

The subclass $\EtR\subseteq\ER$ of all train-invariant randomness e-predictors
is important since under the assumption of randomness
it is natural to consider only train-invariant predictors:
the requirement of train-invariance
is a special case of the principle of sufficiency in statistical inference
\cite[2.3.(ii)]{Cox/Hinkley:1974}.
Let us refer to the requirement of train-invariance under the assumption of randomness
as the \emph{train-invariance principle}.
The train-invariance principle is a special case not only of the sufficiency principle
but also of the invariance principle \cite[Example~2.35]{Cox/Hinkley:1974},
which makes it even more convincing.

For conformal and randomness e-predictors,
validity and efficiency change direction as compared with p-predictors:
for validity, typical values $f(y_{n+1})$ should not be too large,
and for efficiency typical values $f(y)$ at the false labels $y\ne y_{n+1}$
should be as large as possible.
Again validity is automatic under randomness,
and Theorem~\ref{thm:Kolmogorov} below establishes
efficiency guarantees for conformal e-prediction
in terms of train-invariant randomness e-prediction.
Similarly to $\ER$ and $\EtX$,
elements of the class $\EX$ will be referred to as \emph{exchangeability e-predictors},
but we are not particularly interested in them per se.

We will also need another, even narrower, subclass of $\ER$, $\EiR\subseteq\EtR$.
The class $\EiR$ consists of all e-variables $E\in\ER$
that are invariant w.r.\ to all permutations:
\[
  E(z_{\pi(1)},\dots,z_{\pi(n+1)})
  =
  E(z_{1},\dots,z_{n+1})
\]
for each permutation $\pi$ of $\{1,\dots,n+1\}$;
let us call such randomness e-variables \emph{invariant}.
An equivalent definition is to say that
$E(z_{1},\dots,z_{n+1})$ depends on the data sequence $z_{1},\dots,z_{n+1}$
only via the bag $\lbag z_{1},\dots,z_{n+1}\rbag$ of its elements.
The interpretation is that the value $E(z_1,\dots,z_{n+1})$ for $E\in\EiR$
is the degree to which we can reject the hypothesis
that the bag $\lbag z_{1},\dots,z_{n+1}\rbag$
resulted from its elements being generated in the IID fashion.

A big advantage of e-variables over p-variables is that the average of e-variables
is again an e-variable.
This allows us to define, given an e-variable $E\in\ER$,
two derivative e-variables:
\begin{align}
  E\i(z_1,\dots,z_{n+1})
  &:=
  \frac{1}{(n+1)!}
  \sum_{\pi}
  E(z_{\pi(1)},\dots,z_{\pi(n+1)}),
  \label{eq:i}\\
  E\X(z_1,\dots,z_{n+1})
  &:=
  \frac{E(z_1,\dots,z_{n+1})}{E\i(z_1,\dots,z_{n+1})},
  \label{eq:X}
\end{align}
with $\pi$ ranging over the permutations of $\{1,\dots,n+1\}$
and $0/0$ interpreted as 1
(the last convention may be relevant to \eqref{eq:X}).
More generally, we can allow any $E\in\EEE$ in~\eqref{eq:i} and~\eqref{eq:X}.
It is clear that $E\i\in\EiR$ whenever $E\in\ER$
and that $E\X\in\EX$ for all $E\in\EEE$.
For $E\in\ER$,
the operator~\eqref{eq:i} performs averaging of e-values
over all permutations of an input data sequence,
and~\eqref{eq:X} is the relative deviation of an e-value from the average.

In the most important for us case
(transforming train-invariant randomness e-predictors to conformal e-predictors),
the operator \eqref{eq:X} is polynomially computable:
namely, if $E\in\EtR$ is efficiently computable,
the extra time when computing $E\X\in\EtX$
on top of computing $E$ is linear, $O(n)$.
This follows from
\[
  E\X(z_1,\dots,z_{n+1})
  =
  \frac
  {E(z_1,\dots,z_{n+1})}
  {
    \frac{1}{n+1}
    \sum_{i=1}^{n+1}
    E(z_{i+1},\dots,z_{n+1},z_{1},\dots,z_i)
  }
\]
for $E\in\EtR$.

\section{Main results}
\label{sec:results}

Let $B$ be a Markov kernel with source $\mathbf{Z}$ and target $\mathbf{Y}$,
which we will write in the form $B:\mathbf{Z}\hookrightarrow\mathbf{Y}$
(as in \cite[Sect.~A.4]{Vovk/etal:2022book}).
We will write $B(A\mid z)$ for its value on $z\in\mathbf{Z}$ and $A\subseteq\mathbf{Y}$
(where $A$ is measurable)
and write $\int f(y)B(\d y\mid z)$ for the integral of a function $f$ on $\mathbf{Y}$
w.r.\ to the measure $A\mapsto B(A\mid z)$.
We will show that the efficiency of the conformal predictor
derived from a train-invariant randomness predictor $E$
is not much worse than the efficiency of the original randomness predictor $E$
on average,
and $B$ will define the meaning of ``on average''.

\subsection{From train-invariant randomness e-prediction to conformal e-prediction}
\label{subsec:e}

The following statement shows that the efficiency
does not suffer much on average when we move from randomness e-prediction
to exchangeability e-prediction.
It is more general than what we need at the moment,
since it also covers randomness e-predictors that are not train-invariant.

\begin{theorem}\label{thm:Kolmogorov}
  Let $B:\mathbf{Z}\hookrightarrow\mathbf{Y}$ be a Markov kernel.
  For each randomness e\-/predictor $E$,
  \begin{equation}\label{eq:Kolmogorov}
    G(z_1,\dots,z_n,z_{n+1})
    :=
    \e^{-1}
    \int
    \frac
      {E(z_1,\dots,z_n,x_{n+1},y)}
      {E\X(z_1,\dots,z_n,x_{n+1},y)}
    B(\d y\mid z_{n+1})
  \end{equation}
  (with $0/0$ set to 0 and $z_{n+1}$ represented as $(x_{n+1},y_{n+1})$)
  is a randomness e-variable.
\end{theorem}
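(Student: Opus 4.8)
The plan is to reduce the statement to the invariant e-variable $E\i$ and then bound the resulting integral by a measure-tilting (perturbation) argument that manufactures the constant $\e$. First I would record that, away from degenerate values, $E/E\X = E/(E/E\i) = E\i$; running through the $0/0$ (and $\infty$) conventions one checks that in every case $E(z_1,\dots,z_n,x_{n+1},y)/E\X(z_1,\dots,z_n,x_{n+1},y) \le E\i(z_1,\dots,z_n,x_{n+1},y)$, with equality except on a set where the left-hand side is merely made smaller. Hence it suffices to show that
\[
  G'(z_1,\dots,z_{n+1}) := \e^{-1}\int E\i(z_1,\dots,z_n,x_{n+1},y)\,B(\d y\mid z_{n+1})
\]
is a randomness e-variable, i.e.\ that $\int G'\dd Q^{n+1}\le1$ for every probability measure $Q$ on $\mathbf{Z}$.

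Next, fix $Q$ and apply Tonelli. The integrand of $G'$ depends on $y_{n+1}$ only through the kernel $B(\cdot\mid z_{n+1})$, so integrating out the labels shows that the ``relabelled'' last example $(x_{n+1},y)$ has law $\rho$, namely the image of $Q(\d z)\,B(\d y\mid z)$ under $(z,y)\mapsto(x(z),y)$; this $\rho$ is a probability measure on $\mathbf{Z}$ and is independent of $z_1,\dots,z_n\sim Q$. Therefore
\[
  \int G'\dd Q^{n+1} = \e^{-1}\,I, \qquad I := \int E\i \dd\bigl(Q^{\otimes n}\otimes\rho\bigr),
\]
and it remains to show $I\le\e$.

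The crux is to tilt the product measure towards $\rho$. For $t\in(0,1)$ put $P_t := (1-t)\,Q + t\,\rho$, a probability measure on $\mathbf{Z}$. Since $E\i\in\EiR\subseteq\ER$, we have $\int E\i\dd P_t^{n+1}\le1$. Expanding $P_t^{n+1}=\bigotimes_{i=1}^{n+1}\bigl((1-t)Q+t\rho\bigr)$ and using that every summand integrates to a nonnegative number, I keep only the $n+1$ terms placing a single copy of $\rho$ in one slot and $Q$ in the others; by the permutation-invariance of $E\i$ each such term equals $I$, and each carries coefficient $t(1-t)^n$. This gives
\[
  1 \ge \int E\i\dd P_t^{n+1} \ge (n+1)\,t\,(1-t)^n\,I .
\]

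Finally, choosing the tilt $t=1/(n+1)$ makes $(n+1)t(1-t)^n = \bigl(n/(n+1)\bigr)^n$, whence
\[
  I \le \left(\frac{n+1}{n}\right)^{\!n} = \left(1+\frac1n\right)^{\!n} < \e ,
\]
so $\int G'\dd Q^{n+1}=\e^{-1}I<1$, as required. I expect the main obstacle to be locating the correct perturbation $P_t$ together with the optimal tilt $t=1/(n+1)$, which are exactly what produce the constant $\e$ (the monotone convergence $(1+1/n)^n\uparrow\e$ explains why $\e$, rather than a smaller value, is the clean uniform bound); the reduction to $E\i$, the identification of $\rho$, and the Tonelli/expansion steps are routine once the $0/0$ conventions and the requisite measurability and absolute-continuity facts are checked.
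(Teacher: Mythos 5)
Your proof is correct, and its engine is the same one the paper uses: replacing each example's label independently with probability $1/(n+1)$ (equivalently, tilting $Q$ to $(1-t)Q+t\rho$ with $t=1/(n+1)$) and extracting the coefficient $(n+1)t(1-t)^n=(n/(n+1))^n\ge\e^{-1}$ of the ``exactly one replacement'' event. The execution differs, though. The paper argues pointwise: it factorizes the target as $G=G_2G_3$ with $G_2\in\EiR$ (the expectation of $E\i$ under the independent-perturbation scheme) and $G_3\in\EX$ (a ratio whose average over permutations is $1$), bounds $G_2\ge\e^{-1}G_1$ via the exactly-one-flip probability, and then needs the small additional fact that a product of a bag-measurable randomness e-variable with an exchangeability e-variable is again a randomness e-variable. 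You instead fix $Q$, integrate globally, use Tonelli to identify the law $Q^{\otimes n}\otimes\rho$ of the relabelled sequence, and lower-bound $\int E\i\dd P_t^{n+1}\le1$ by the $n+1$ single-$\rho$ terms of the binomial expansion, each equal to $I$ by the permutation invariance of $E\i$. Your route is more elementary and self-contained (no auxiliary e-variables, no product lemma), and it even yields the slightly sharper bound $\int G\dd Q^{n+1}\le\e^{-1}(1+1/n)^n<1$; what it gives up is the structural information in the paper's factorization, which exhibits explicit intermediate members of $\EiR$ and $\EX$ and thereby connects the theorem to the surrounding taxonomy of predictor classes. Your reduction of $E/E\X$ to $E\i$ under the stated $0/0$ conventions is also handled correctly.
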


We can interpret \eqref{eq:Kolmogorov} as a statement
that $E\X$ is almost as efficient as $E$
unless the true data sequence $z_1,\dots,z_{n+1}$ does not look IID.
The ``unless'' clause makes sense in view of our assumption of randomness.
According to \eqref{eq:Kolmogorov}
and assuming that $B(\cdot\mid z_{n+1})$
is concentrated on $\mathbf{Y}\setminus\{y_{n+1}\}$,
the mean ratio of the degree to which $E$ rejects a false label $y$
to the degree to which $E\X$ rejects $y$
is not large under any probability measure
that may depend on the test example
unless we can reject the assumption of randomness for the true data sequence.
The gap $\e^{-1}\approx0.37$ between the mean ratio
and the e-value at which we reject the assumption of randomness
is optimal (Theorem~\ref{thm:anti-Kolmogorov} below).

From now on, until the end of the main paper,
let us assume that $E$ is train-invariant in Theorem~\ref{thm:Kolmogorov},
i.e., $E$ is a train-invariant randomness e-predictor
(the only exception is the line following the statement of Corollary~\ref{cor:p-t}).
Then $E\X$ in \eqref{eq:Kolmogorov} is a conformal e-predictor.

A full proof of Theorem~\ref{thm:Kolmogorov}
will be given in Sect.~\ref{subapp:proof-Kolmogorov},
but we will demonstrate the idea of the proof on a simple special case,
which also makes the statement of the theorem more tangible.
In the case of binary classification, $\mathbf{Y}:=\{-1,1\}$,
the most natural choice of $B$ is $B(\{y\}\mid(x,y)):=0$,
so that the Markov kernel sends every example $(x,y)$ to the other label $-y$.
We can then rewrite \eqref{eq:Kolmogorov} as
\begin{equation}\label{eq:binary}
  G(z_1,\dots,z_n,z_{n+1})
  :=
  \e^{-1}
  \frac
    {E(z_1,\dots,z_n,x_{n+1},-y_{n+1})}
    {E\X(z_1,\dots,z_n,x_{n+1},-y_{n+1})},
\end{equation}
which does not involve any averaging.
We can interpret~\eqref{eq:binary} as the conformal e-predictor $E\X$
being almost as efficient as the original train-invariant randomness e-predictor $E$,
where efficiency is measured by the degree to which we reject the false label $-y_{n+1}$.
For example, for a small positive constant $\epsilon$,
$G\ge1/\epsilon$ with probability at most $\epsilon$,
and so
\begin{equation}\label{eq:epsilon}
  E\X(z_1,\dots,z_n,x_{n+1},-y_{n+1})
  >
  \e^{-1} \epsilon E(z_1,\dots,z_n,x_{n+1},-y_{n+1})
\end{equation}
with probability at least $1-\epsilon$.

To get an idea of the proof,
suppose the true data sequence $z_1,\dots,z_{n+1}$,
generated in the IID fashion,
is such that the ratio in \eqref{eq:binary} is large.
By definition, the ratio $E\i=E/E\X$ is an invariant randomness e-variable,
and so only depends on the bag of its input examples
and measures the degree to which that bag does not look IID.
Flipping the label of a randomly chosen $z_i$, $i\in\{1,\dots,n+1\}$,
in the training bag $\lbag z_1,\dots,z_{n+1}\rbag$
leads to a data bag that is still compatible with the assumption of randomness,
and if flipping the label $y_{n+1}$ in the training bag
leads to a large value of $E/E\X$,
this means that the example $z_{n+1}$ was unusual in the training bag,
which makes the original data sequence $z_1,\dots,z_{n+1}$,
in which the unusual example is also the last one,
not compatible with the assumption of randomness.

The following result is a simple statement of optimality for Theorem~\ref{thm:Kolmogorov}.

\begin{theorem}\label{thm:anti-Kolmogorov}
  The constant $\e^{-1}$ in Theorem~\ref{thm:Kolmogorov}
  cannot be replaced by a larger one,
  even if we assume $E\in\EtR$.
\end{theorem}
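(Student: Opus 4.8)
The plan is to prove optimality by exhibiting, for every constant $c>\e^{-1}$, a train-invariant randomness e-predictor $E\in\EtR$, a Markov kernel $B$, and a product measure $Q^{n+1}$ for which $c\int(E/E\X)\,B(\d y\mid z_{n+1})$ violates the randomness e-variable inequality. Since the slack between the claimed constant $\e^{-1}$ and the best achievable value $\e^{-1}\cdot\e=1$ only vanishes as $n\to\infty$, it suffices to build a family indexed by $n$ whose witnessing expectation tends to $\e$. I would work in the binary setting already set up around \eqref{eq:binary}: trivial object space, $\mathbf{Y}:=\{-1,1\}$, and the label-flipping kernel $B(\{y\}\mid(x,y)):=0$.

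I would take $E$ invariant, so that $E\in\EiR\subseteq\EtR$ and, by \eqref{eq:X}, $E/E\X=E\i=E$; concretely let $E$ depend only on the number of $+1$ labels in its bag,
\[
  E(z_1,\dots,z_{n+1}):=\Bigl(1+\tfrac1n\Bigr)^n\,\mathbf{1}\{\text{exactly one of the labels equals }+1\}.
\]
The one genuine computation is the verification that $E\in\ER$. For $Q$ with $Q(\{+1\})=p$ we have $\int E\,\d Q^{n+1}=(1+\tfrac1n)^n(n+1)p(1-p)^n$, and the elementary maximization $\max_{p\in[0,1]}(n+1)p(1-p)^n=(n/(n+1))^n$, attained at $p=1/(n+1)$, gives $\int E\,\d Q^{n+1}\le(1+\tfrac1n)^n(n/(n+1))^n=1$. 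Hence $E$ is a bona fide randomness e-variable, and being permutation-invariant it lies in $\EiR$.

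Now I would evaluate \eqref{eq:binary} (equivalently \eqref{eq:Kolmogorov}) under the point mass $Q:=\delta_{-1}$. The whole sequence is then $(-1,\dots,-1)$, its flipped last label is $+1$, so the argument of $E$ in \eqref{eq:binary} is a bag containing exactly one $+1$, on which $E=(1+\tfrac1n)^n$; since $E$ is invariant, $E/E\X=E\i=E$ there as well. Therefore $\int(E/E\X)\,B(\d y\mid z_{n+1})\,\d Q^{n+1}=(1+\tfrac1n)^n$. Replacing $\e^{-1}$ by any $c>\e^{-1}$ would make this expectation $c(1+\tfrac1n)^n\to c\e>1$, so for all large $n$ the modified object has integral exceeding $1$ under $Q^{n+1}$ and is not a randomness e-variable, contradicting the putative improvement. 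This establishes that $\e^{-1}$ cannot be enlarged, even within $\EtR$.

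The mathematics is light: the constant $\e$ appears exactly through $(1+\tfrac1n)^n\uparrow\e$, the finite-$n$ avatar of $\max_{x\ge0}x\e^{-x}=\e^{-1}$ underlying Theorem~\ref{thm:Kolmogorov}, and the only estimate needed is the maximization of $p(1-p)^n$. I expect the main obstacle to be presentational rather than substantive: threading the definitions of $E\i$ and $E\X$ together with the $0/0$ conventions so that the reduction $E/E\X=E\i=E$ is transparent at the evaluated configuration, and confirming that the degenerate witness $Q=\delta_{-1}$ is admissible (if one prefers, it can be reached as the limit of $\mathrm{Ber}(\epsilon)$ as $\epsilon\to0$, with the supremum of the expectation still approaching $\e$).
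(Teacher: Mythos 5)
Your proposal is correct and is essentially the paper's own argument: the same witness $E=(1+\tfrac1n)^n\mathbf{1}\{k=1\}$ (the paper writes the constant as $(1-\tfrac{1}{n+1})^{-n}$, which is the same number), the same maximization of $(n+1)p(1-p)^n$ at $p=1/(n+1)$ to verify $E\in\ER$, the same label-flipping kernel, and the same degenerate product measure concentrated on the constant sequence, yielding $c(1+\tfrac1n)^n\to c\e>1$. No substantive differences to report.
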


In principle, Theorem~\ref{thm:anti-Kolmogorov}
follows from a later result (namely, Theorem~\ref{thm:lower-1} below),
but in Sect.~\ref{subapp:proof-anti-Kolmogorov}
we will give a simple independent proof.
The origin of the factor $\e^{-1}$
is the difference between the assumptions of randomness and exchangeability:
while flipping the label of a randomly chosen example $z_i$, $i\in\{1,\dots,n+1\}$,
keeps the exchangeability of the original IID data sequence,
the new sequence ceases to be generated in the IID fashion.
Therefore, we approximate flipping the label of one randomly chosen example
by flipping the label of each example $z_i$, $i\in\{1,\dots,n+1\}$,
with a small probability;
$\e^{-1}$ is the largest probability (which is attainable)
that exactly one label will be flipped.
Full details are given in Appendix~\ref{app:proofs}.

\subsection{From train-invariant randomness p-prediction to conformal p-prediction}
\label{subsec:p}

It is known that, for any $\delta\in(0,1)$,
the function $p\mapsto\delta p^{\delta-1}$
transforms p-values to e-values
and that the function $e\mapsto e^{-1}$
transforms e-values to p-values.
See, e.g., \cite[Propositions~2.1 and~2,2]{Vovk/Wang:2021}.
More generally, any function $f:[0,1]\to[0,\infty]$ integrating to 1
transforms p-values to e-values
(is a \emph{p-to-e calibrator}),
while $e\mapsto\min(e^{-1},1)$ is the optimal way of transforming
e-values to p-values
(it is an optimal \emph{e-to-p calibrator}).
This allows us to adapt Theorem~\ref{thm:Kolmogorov} to p-predictors.

\begin{corollary}\label{cor:p-t}
  Let $B:\mathbf{Z}\hookrightarrow\mathbf{Y}$ be a Markov kernel
  and let $\delta\in(0,1)$.
  For each randomness p-predictor $P$
  there exists an exchangeability p-predictor $P'$ such that
  \begin{equation}\label{eq:cor-p-t}
    G(z_1,\dots,z_n,z_{n+1})
    :=
    \frac{\delta}{\e}
    \int
      \frac
        {P'(z_1,\dots,z_n,x_{n+1},y)}
        {P^{1-\delta}(z_1,\dots,z_n,x_{n+1},y)}\;
    B(\d y\mid z_{n+1})
  \end{equation}
  is a randomness e-variable.
\end{corollary}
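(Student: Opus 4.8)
The plan is to reduce to Theorem~\ref{thm:Kolmogorov} by sandwiching the construction between a p-to-e calibration and an e-to-p calibration chosen so that the constants line up. First I would calibrate $P$ into an e-predictor. Since $P$ is a randomness p-predictor and $p\mapsto\delta p^{\delta-1}$ is a p-to-e calibrator (it is nonnegative on $[0,1]$ and integrates to~$1$), the function
\[
  E := \delta\,P^{\delta-1}
\]
is a randomness e-predictor: under any $Q^{n+1}$ the p-value $P$ is mapped to an e-value, so $\int E\dd Q^{n+1}\le1$. I would then feed $E$ into Theorem~\ref{thm:Kolmogorov}, which tells us that
\[
  \e^{-1}\int\frac{E(z_1,\dots,z_n,x_{n+1},y)}{E\X(z_1,\dots,z_n,x_{n+1},y)}\,B(\d y\mid z_{n+1})
\]
is a randomness e-variable.

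Next I would calibrate the exchangeability e-variable $E\X$ back into a p-variable via the optimal e-to-p calibrator $e\mapsto\min(e^{-1},1)$, setting
\[
  P' := \min\bigl((E\X)^{-1},1\bigr).
\]
Because $E\X\in\EX$ for every $E\in\EEE$, the function $P'$ takes values in $[0,1]$ and is an exchangeability p-variable, i.e., an exchangeability p-predictor; this is the $P'$ whose existence the corollary asserts. It then remains to compare the integrand of~\eqref{eq:cor-p-t} with that of~\eqref{eq:Kolmogorov}. From $E=\delta P^{\delta-1}$ we get $P^{1-\delta}=\delta/E$, hence $1/P^{1-\delta}=E/\delta$, while the definition of $P'$ gives $P'\le(E\X)^{-1}$. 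Multiplying these two bounds yields the pointwise inequality
\[
  \frac{P'}{P^{1-\delta}}
  \le
  \frac{1}{\delta}\,\frac{E}{E\X}.
\]

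Integrating against $B(\d y\mid z_{n+1})$ and multiplying by $\delta/\e$, the factor $\delta$ cancels and we obtain $G\le\e^{-1}\int(E/E\X)\,B(\d y\mid z_{n+1})$, whose right-hand side is exactly the randomness e-variable furnished by Theorem~\ref{thm:Kolmogorov}. Since $G\ge0$ is dominated pointwise by a randomness e-variable, $\int G\dd Q^{n+1}\le1$ for every $Q$, so $G$ is itself a randomness e-variable, as required. The only work beyond this bookkeeping is verifying that the two calibrators preserve the randomness and exchangeability qualifiers in the correct directions, and pinning down the boundary conventions (e.g.\ $P=0$, giving $E=\infty$, and $E\X\in\{0,\infty\}$); I expect the $0/0$ conventions inherited from Theorem~\ref{thm:Kolmogorov} to make the sandwich inequality hold in these degenerate cases without a separate argument. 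The conceptual point to get right---and the origin of the specific constant $\delta/\e$---is simply that the $\delta$ introduced by the p-to-e calibrator exactly absorbs the $1/\delta$ coming from $1/P^{1-\delta}$, leaving Theorem~\ref{thm:Kolmogorov}'s factor $\e^{-1}$ intact.
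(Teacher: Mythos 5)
Your proposal is correct and follows essentially the same route as the paper's own proof: calibrate $P$ to $E:=\delta P^{\delta-1}$, apply Theorem~\ref{thm:Kolmogorov}, and calibrate $E\X$ back to a p-variable, with the $\delta$ from the calibrator cancelling against the $1/\delta$ from $1/P^{1-\delta}$. The only (harmless) difference is that the paper takes $P':=1/E\X$ so that $G$ \emph{equals} the e-variable from Theorem~\ref{thm:Kolmogorov}, whereas your $P':=\min((E\X)^{-1},1)$ yields domination by it, which suffices.
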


The proof is obvious
(calibrate $P$ to get $E\in\EX$ and then calibrate $E\X$ to get $P'\in\PX$),
but it is still spelled out in Sect.~\ref{subapp:proof-p-t}.
As before, we concentrate on the case where $P$ is train-invariant,
and in this case $P'$ can be chosen as conformal p-predictor.

The interpretation of \eqref{eq:cor-p-t} is much simpler
in the binary case $\mathbf{Y}=\{-1,1\}$ with the same Markov kernel as before.
In this case \eqref{eq:cor-p-t} becomes
\begin{equation*}
  G(z_1,\dots,z_n,z_{n+1})
  :=
  \frac{\delta}{\e}
  \frac
    {P'(z_1,\dots,z_n,x_{n+1},-y_{n+1})}
    {P^{1-\delta}(z_1,\dots,z_n,x_{n+1},-y_{n+1})}.
\end{equation*}
Therefore,
\begin{equation*}
  P'(z_1,\dots,z_n,x_{n+1},-y_{n+1})
  <
  \frac{\e}{\delta\epsilon}
  P^{1-\delta}(z_1,\dots,z_n,x_{n+1},-y_{n+1})
\end{equation*}
with probability at least $1-\epsilon$.

In conclusion of this section,
let us discuss \eqref{eq:cor-p-t} in general.
The interpretation of \eqref{eq:cor-p-t} is that,
under the randomness of the true data sequence,
$P'(z_1,\dots,z_n,x_{n+1},y)$ is typically small
(perhaps not to the same degree)
when $P(z_1,\dots,z_n,x_{n+1},y)$ is small;
i.e., we do not lose much in efficiency
when converting train\-/invariant randomness p-predictors to conformal p-predictors.
To see this, fix small $\epsilon_1,\epsilon_2\in(0,1)$.
Then we will have $G(z_1,\dots,z_n,z_{n+1})<1/\epsilon_1$
for the true data sequence $z_1,\dots,z_n,z_{n+1}$
unless a rare event (of probability at most $\epsilon_1$) happens.
For the vast majority of the potential labels $y\in\mathbf{Y}$ we will then have
\begin{equation}\label{eq:bounded-t}
  \frac{\delta}{\e}
  \frac
    {P'(z_1,\dots,z_n,x_{n+1},y)}
    {P^{1-\delta}(z_1,\dots,z_n,x_{n+1},y)}
  <
  \frac{1}{\epsilon_1\epsilon_2},
\end{equation}
where ``the vast majority'' means that the $B(\cdot\mid z_{n+1})$ measure
of the $y$ satisfying \eqref{eq:bounded-t} is at least $1-\epsilon_2$.
We can rewrite \eqref{eq:bounded-t} as
\begin{equation}\label{eq:rewrite}
  P'(z_1,\dots,z_n,x_{n+1},y)
  <
  \frac{\e}{\delta\epsilon_1\epsilon_2}
  P^{1-\delta}(z_1,\dots,z_n,x_{n+1},y),
\end{equation}
so that $P'(z_1,\dots,z_n,x_{n+1},y)\to0$ as $P(z_1,\dots,z_n,x_{n+1},y)\to0$.
This is, of course, true for any Markov kernel $B$.

\section{Applications to multi-class classification and optimality results}
\label{sec:classification}

In this section we discuss the case of classification, $\left|\mathbf{Y}\right|<\infty$,
but now we are interested in the non-binary case $\left|\mathbf{Y}\right|>2$
(the $\sigma$-algebra on $\mathbf{Y}$ is discrete, as usual).
Let us only discuss reduction of randomness e-predictors to conformal e-predictors.
Reduction of randomness p-predictors to conformal p-predictors is completely analogous;
it just uses \eqref{eq:cor-p-t} instead of \eqref{eq:Kolmogorov}.

In the case of multi-class classification, $2<\left|\mathbf{Y}\right|<\infty$,
the most natural Markov kernel $B$ is perhaps the one for which $B(\cdot\mid(x,y))$
is the uniform probability measure on $\mathbf{Y}\setminus\{y\}$.
In this case we can rewrite \eqref{eq:Kolmogorov} as
\begin{equation}\label{eq:multi-class}
  G(z_1,\dots,z_n,z_{n+1})
  :=
  \frac{\e^{-1}}{\left|\mathbf{Y}\right|-1}
  \sum_{y\in\mathbf{Y}\setminus\{y_{n+1}\}}
  \frac
    {E(z_1,\dots,z_n,x_{n+1},y)}
    {E\X(z_1,\dots,z_n,x_{n+1},y)}.
\end{equation}
As before, we assume that $E$ is train-invariant.
The interpretation of~\eqref{eq:multi-class} is that the conformal e-predictor $E\X$
is almost as efficient as the original train-invariant randomness e-predictor $E$ on average;
as before, efficiency is measured by the degree
to which we reject the false labels $y\ne y_{n+1}$.
Roughly, on average, we lose at most a factor of $\e$ in the e-values of false labels
when we replace $E$ by $E\X$.

Of course, we can avoid ``on average'' by making \eqref{eq:multi-class} cruder
and replacing it by the existence of $G\in\ER$ satisfying
\begin{multline}\label{eq:anti-Kolmogorov-1}
  \forall(z_1,\dots,z_n)\in\mathbf{Z}^n \;
  \forall x_{n+1}\in\mathbf{X} \;
  \forall y\in\mathbf{Y}\setminus\{y_{n+1}\}:\\
  G(z_1,\dots,z_n,z_{n+1})
  \ge
  \frac{\e^{-1}}{\left|\mathbf{Y}\right|-1}
  \frac
    {E(z_1,\dots,z_n,x_{n+1},y)}
    {E\X(z_1,\dots,z_n,x_{n+1},y)},
\end{multline}
where $z_{n+1}:=(x_{n+1},y_{n+1})$.
For a small positive constant $\epsilon$,
we can then claim that,
with probability at least $1-\epsilon$ over the true data sequence,
\begin{equation}\label{eq:epsilon-worst}
  \forall y\in\mathbf{Y}\setminus\{y_{n+1}\}:
  E\X(z_1,\dots,z_n,x_{n+1},y)
  >
  \frac{\e^{-1}\epsilon}{\left|\mathbf{Y}\right|-1}
  E(z_1,\dots,z_n,x_{n+1},y).
\end{equation}

An interesting variation of \eqref{eq:multi-class},
corresponding to the Markov kernel $B$
for which $B(\cdot\mid(x,y))$ is the uniform probability measure on $\mathbf{Y}$,
is
\begin{equation}\label{eq:anti-Kolmogorov-2}
  G(z_1,\dots,z_n,z_{n+1})
  :=
  \frac{\e^{-1}}{\left|\mathbf{Y}\right|}
  \sum_{y\in\mathbf{Y}}
  \frac
    {E(z_1,\dots,z_n,x_{n+1},y)}
    {E\X(z_1,\dots,z_n,x_{n+1},y)}.
\end{equation}
Under this definition,
the randomness e-variable $G$ does not depend on $y_{n+1}$.

Let us check that the denominators,
$\left|\mathbf{Y}\right|-1$ or $\left|\mathbf{Y}\right|$,
in \eqref{eq:multi-class}, \eqref{eq:anti-Kolmogorov-1}, and \eqref{eq:anti-Kolmogorov-2}
are asymptotically optimal.

\begin{theorem}\label{thm:lower-1}
  For each constant $c>1$ the following statement holds true
  for a sufficiently large $\left|\mathbf{Y}\right|$ and a sufficiently large $n$.
  There exists a train-invariant randomness e-predictor $E$
  such that for each randomness e-variable $G$
  there exist $z_1,\dots,z_n$, $z_{n+1}=(x_{n+1},y_{n+1})$, and $y\ne y_{n+1}$
  such that
  \begin{equation*} 
    G(z_1,\dots,z_n,z_{n+1})
    <
    \frac{c\e^{-1}}{\left|\mathbf{Y}\right|}
    \frac
      {E(z_1,\dots,z_n,x_{n+1},y)}
      {E\X(z_1,\dots,z_n,x_{n+1},y)}.
  \end{equation*}
\end{theorem}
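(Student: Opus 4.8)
The plan is to reduce the statement to a single inequality about randomness e\-/variables. Since $E\X=E/E\i$, the ratio $E/E\X$ is just $E\i\in\EiR$, and conversely every $V\in\EiR$ equals $E\i$ for the choice $E:=V$ (for which $E\X\equiv1$). So I may search for $E=V\in\EiR$, and the object to bound is $V$ evaluated at the completed bag $\lbag z_1,\dots,z_n,(x_{n+1},y)\rbag$. Putting $T:=\frac{c\e^{-1}}{\left|\mathbf Y\right|}\max_{y\ne y_{n+1}}V(\lbag z_1,\dots,z_n,(x_{n+1},y)\rbag)$, the assertion ``for every $G$ some input violates the bound'' is precisely ``no $G\in\ER$ has $G\ge T$ pointwise''. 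The crucial remark is that such a $G$ exists iff $\sup_Q\int T\dd Q^{n+1}\le1$: if the supremum is $\le1$ then $T$ is itself an admissible $G$, and if some $G\in\ER$ dominates $T$ then $\int T\dd Q^{n+1}\le\int G\dd Q^{n+1}\le1$ for every $Q$. Hence the theorem is equivalent to constructing $V\in\EiR$ and a law $Q$ on $\mathbf Z$ with
\[
  \int\max_{y\ne y_{n+1}}V(\lbag z_1,\dots,z_n,(x_{n+1},y)\rbag)\dd Q^{n+1}>\frac{\left|\mathbf Y\right|\e}{c}.
\]

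To see what this requires, note that the positive result \eqref{eq:anti-Kolmogorov-2} states $\frac{\e^{-1}}{\left|\mathbf Y\right|}\sum_{y}V(\lbag z_1,\dots,z_n,(x_{n+1},y)\rbag)\in\ER$, so $\int\sum_y V\dd Q^{n+1}\le\left|\mathbf Y\right|\e$ for all $Q$; since the maximum is at most the sum, the quantity above can never exceed the ceiling $\left|\mathbf Y\right|\e$. I therefore have to approach this ceiling, which forces two things at once: a law $Q$ for which \eqref{eq:anti-Kolmogorov-2} is essentially tight, and the additional feature that, for $Q$\-/typical sequences, the sum over $y$ is carried by a single completion $y^\ast\ne y_{n+1}$, so that $\max=\sum$. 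The factor $\e$ in the ceiling comes from the Poisson$(1)$ flip already used in Theorem~\ref{thm:anti-Kolmogorov}: comparing ``$n$ draws from $Q$ together with one appended example'' against the IID law that draws each example from $Q$ with probability $\tfrac{n}{n+1}$ and equals that appended example otherwise, the chance of exactly one appended copy equals $(1+1/n)^{-n}\to\e^{-1}$, whose reciprocal supplies the $\e$.

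The difficulty is visible in two partial constructions that each capture one half. Taking $Q$ concentrated at a single example and $V$ a multiple of the indicator of a single\-/oddball bag (exactly one label different from the rest), as in the proof of Theorem~\ref{thm:anti-Kolmogorov}, makes $\int\sum_y V\dd Q^{n+1}\to(\left|\mathbf Y\right|-1)\e$, so the sum does reach the ceiling; but under a point mass all $\left|\mathbf Y\right|-1$ oddball completions are interchangeable, so the maximum keeps only one of them and $\int\max\dd Q^{n+1}\to\e$. At the opposite extreme, taking $Q$ uniform and (for $n+1=\left|\mathbf Y\right|$) $V$ a multiple of the indicator of a permutation\-/type all\-/distinct bag, for which the AM--GM bound $\prod_i q_i\le\left|\mathbf Y\right|^{-\left|\mathbf Y\right|}$ gives the clean worst\-/case value $\sup_{Q'}Q'^{\,n+1}(\mathcal F)=\left|\mathbf Y\right|!/\left|\mathbf Y\right|^{\left|\mathbf Y\right|}$ and hence $V\approx\e^{\left|\mathbf Y\right|}$ on the family, makes the completing label unique (the single missing label) and roughly uniform, so that $\max=\sum$; but here the uniform\-/$Q$ identity $\sum_y Q(y)\,\E_{Q^n}[V(\lbag z_1,\dots,z_n,(x_{n+1},y)\rbag)]=\E_{Q^{n+1}}[V]\le1$ caps the sum at $\left|\mathbf Y\right|$, giving only $\int\max\dd Q^{n+1}\to\left|\mathbf Y\right|-1$. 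This latter construction already proves the theorem for every $c>\e$.

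The main obstacle, and the technical heart of the full proof, is to reconcile these two mechanisms so as to obtain $\int\max\dd Q^{n+1}\to\left|\mathbf Y\right|\e$ and hence cover every $c>1$. I would build a bag family $\mathcal F$ that is at once \emph{robustly rare}, at the level $(\left|\mathbf Y\right|\e)^{-1}$ under \emph{all} IID laws (so that $V\approx\left|\mathbf Y\right|\e$ on $\mathcal F$), through a dispersion condition that a concentrated $Q'$ cannot exploit, and \emph{uniquely completable}, so that a $Q$\-/typical training bag extends into $\mathcal F$ by exactly one, equidistributed label $y^\ast$, with the single extra example playing the role of the Poisson\-/extremal flip so that the reachability law and the worst\-/case law differ by precisely the factor $\e$. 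Verifying that a single $V\in\EiR$ can combine robust $(\left|\mathbf Y\right|\e)^{-1}$\-/rarity with uniqueness of the completion---equivalently, achieve tightness in \eqref{eq:anti-Kolmogorov-2} while keeping $\max=\sum$---is exactly the delicate point; it can be met by neither a point mass nor the uniform law, and balancing dispersion against unique completability is what pins down the asymptotic regime relating $n$ and $\left|\mathbf Y\right|$ and yields the required $y^\ast\ne y_{n+1}$.
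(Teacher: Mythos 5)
Your setup is correct and quite clean: the reduction to $V\in\EiR$ is lossless (since $E/E\X=E\i$ and any $V\in\EiR$ arises as $E\i$ with $E:=V$), the duality ``no $G\in\ER$ dominates $T$ iff $\sup_Q\int T\dd Q^{n+1}>1$'' is valid for finite $\mathbf{Y}$, and the ceiling $\left|\mathbf{Y}\right|\e$ together with the two requirements (tightness in \eqref{eq:anti-Kolmogorov-2} and $\max\approx\sum$) is exactly the right diagnosis. Your two partial constructions check out, and the all-distinct-bag one genuinely proves the statement for every $c>\e$. But the theorem claims every $c>1$, and you stop precisely at the step that carries the theorem: you describe desiderata for the family $\mathcal F$ (``robustly rare at level $(\left|\mathbf{Y}\right|\e)^{-1}$'' and ``uniquely completable'') and declare its existence the delicate point, without producing it. As written, this is a proof of a weaker statement plus a research plan, not a proof of Theorem~\ref{thm:lower-1}.

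For the record, the paper fills this gap with a construction that does not quite match your envisioned shape (in particular, $V$ on $\mathcal F$ is much larger than $\left|\mathbf{Y}\right|\e$ and the completability probability is small rather than close to $1$, the product still approaching the ceiling). Take $\mathbf{Y}$ to be the disjoint union of $\{0',1'\}$ and $\{-k,\dots,k\}$ with $1\ll k\ll\sqrt n$, ignore objects, and let $E$ be a constant $a\e\sqrt{\pi/2}\,n^{3/2}$ on sequences whose first $n$ labels are binary and whose last label equals $\sum_{i=1}^n y_i-n/2$; the disjointness of the two alphabets makes $E\i=E/(n+1)$ and forces $y\ne y_{n+1}$. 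Membership $E\in\EtR$ follows from maximizing $(1-\theta)^n\theta$ (giving the $\e^{-1}$) times a local-limit-theorem bound $\sim\sqrt{2/(\pi n)}$ for hitting a prescribed sum; the upper bound on any $G$ comes from the event $\sum_i Y_i-n/2\in\{-k,\dots,k\}$ having probability $\lesssim (2k+1)\sqrt{2/(\pi n)}$ under every Bernoulli law. In your language this is exactly a $V$ for which the sum over completions is carried by the single label $y=\sum y_i-n/2$ and for which $\int\max_y V\dd B_{1/2}^{n+1}\sim(2k+1)a\e$, beating $\left|\mathbf{Y}\right|\e/c=(2k+3)\e/c$ once $k$ is large and $a$ is close to $1$ --- the two regimes $k\gg1$ and $k\ll\sqrt n$ being precisely what pins down ``sufficiently large $\left|\mathbf{Y}\right|$ and then sufficiently large $n$''. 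Without this (or an equivalent) construction your argument does not reach any $c\le\e$.
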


\noindent
When we say ``for a sufficiently large $\left|\mathbf{Y}\right|$''
in Theorem~\ref{thm:lower-1},
the lower bound on $\left|\mathbf{Y}\right|$ is allowed to depend on $c$,
and when we say ``and a sufficiently large $n$'',
the lower bound on $n$ is allowed to depend on $c$ and $\left|\mathbf{Y}\right|$.

A complete proof of Theorem~\ref{thm:lower-1}
is given in Sect.~\ref{subapp:proof-lower-1},
but the informal idea of the proof is that we can ignore the objects
and make a false test label $y\ne y_{n+1}$
encode the bag $\lbag y_1,\dots,y_n\rbag$ of the training labels.
We have $E\i:=E/E\X\in\EiR$.
If we also make $y$ easily distinguishable
from the training labels $y_1,\dots,y_n$,
the value $E\i(z_1,\dots,z_n,x_{n+1},y)$
(even if depending only on $\lbag z_1,\dots,z_n,x_{n+1},y\rbag$)
can be made large,
even for a true data sequence generated in the IID fashion.

Formally, Theorem~\ref{thm:lower-1} is an inverse
to \eqref{eq:anti-Kolmogorov-1} and \eqref{eq:anti-Kolmogorov-2},
but it has several weaknesses in this role:
\begin{itemize}
\item
  It shows that $E\X$ is not competitive with $E$ in some cases,
  but can there be another conformal e-predictor $E'\in\EtX$
  that is better than $E\X$ in this respect?
\item
  Even if the ratio $E(z_1,\dots,z_n,x_{n+1},y)/E'(z_1,\dots,z_n,x_{n+1},y)$
  for $E'\in\EtX$
  is very large,
  it is not so interesting if already $E'(z_1,\dots,z_n,x_{n+1},y)$
  is very large.
\item
  Even if the ratio $E(z_1,\dots,z_n,x_{n+1},y)/E'(z_1,\dots,z_n,x_{n+1},y)$
  for $E'\in\EtX$
  is very large,
  it is not so interesting if $G(z_1,\dots,z_n,z_{n+1})$
  is very large for the true data sequence.
\end{itemize}
The following result overcomes these weaknesses.

\begin{theorem}\label{thm:lower-2}
  For each constant $c\in(0,1)$ and each label space $\mathbf{Y}$,
  $1<\left|\mathbf{Y}\right|<\infty$,
  the following statement holds true for
  sufficiently large $n$.
  There exists a train-invariant randomness e-predictor $E$
  such that for each conformal e-predictor $E'$ and each randomness e-variable $G$
  there exist $z_1,\dots,z_n$, $z_{n+1}=(x_{n+1},y_{n+1})$, and $y\ne y_{n+1}$
  such that
  \begin{align}
    G(z_1,\dots,z_n,z_{n+1})
    &\le
    2,\label{eq:system-G}\\
    E'(z_1,\dots,z_n,x_{n+1},y)
    &\le
    2.01,\label{eq:system-E'-used}\\
    E(z_1,\dots,z_n,x_{n+1},y)
    &\ge
    c\left|\mathbf{Y}\right|.\label{eq:system-E}
  \end{align}
\end{theorem}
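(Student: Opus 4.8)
The plan is to exhibit a single train-invariant randomness e-predictor $E$ and then, for each pair $(E',G)$, a probability measure $\nu$ on configurations under which all three inequalities \eqref{eq:system-G}--\eqref{eq:system-E} hold simultaneously with positive probability; any configuration in the support of $\nu$ avoiding the few ``bad'' events is then a witness. Throughout I would ignore the objects and identify $\mathbf{Y}$ with $\Z_{\left|\mathbf{Y}\right|}$, so that a data sequence is just a string of labels and all four quantities $E,E',G$ depend only on the labels.

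First I would construct $E$ as a robustified form of $\left|\mathbf{Y}\right|\cdot\mathbf{1}[\,y_1+\dots+y_{n+1}\equiv 0\Mod{\left|\mathbf{Y}\right|}\,]$. This is permutation-invariant, hence train-invariant, and on its support $E=\left|\mathbf{Y}\right|\ge c\left|\mathbf{Y}\right|$, giving \eqref{eq:system-E}. The only thing to verify is that it is a randomness e-variable, $\int E\dd Q^{n+1}\le 1$ for all $Q$; the sole obstruction is a near-degenerate label marginal (say $Q$ concentrated on a single $a$ with $(n+1)a\equiv 0$), which I would remove by multiplying the indicator by $\mathbf{1}[\text{no label occupies more than a }(1-\eta)\text{ fraction of the string}]$ and using a smoothing/Fourier estimate showing that, once a linear number of ``spread'' symbols is present, the residue $\sum_i y_i$ is nearly uniform modulo $\left|\mathbf{Y}\right|$; hence $Q^{n+1}(\{\textstyle\sum\equiv 0\}\cap A)\le(1+o(1))/\left|\mathbf{Y}\right|$ uniformly in $Q$.

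Next I would set up the coupling $\nu$. Draw $y_1,\dots,y_n$ i.i.d.\ uniform and put the probe label $y:=-\sum_{i\le n}y_i\Mod{\left|\mathbf{Y}\right|}$, so that the probe string $(y_1,\dots,y_n,y)$ is \emph{uniform on} $\{\sum\equiv 0\}$; this law is exchangeable and, crucially, has uniform $n$-dimensional marginals. Draw the true label $y_{n+1}$ independently uniform, so that the true string $(y_1,\dots,y_n,y_{n+1})$ is exactly the product $U^{n+1}$. Since the probe marginal is exchangeable, $\int E'\dd(\text{probe law})\le 1$ for every $E'\in\EX$, so $\nu(E'>2.01)\le 1/2.01$; since the true marginal is a product, $\int G\dd(\text{true law})\le 1$ for every $G\in\ER$, so $\nu(G>2)\le 1/2$. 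As $1/2.01+1/2<1$ and $E\ge c\left|\mathbf{Y}\right|$ everywhere on the support, a configuration meeting all three bounds exists. The razor-thin slack $1/2.01+1/2<1$ is exactly what the constant $2.01$ (rather than $2$) buys, which is why the statement is asymmetric.

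The main obstacle is the constraint $y\ne y_{n+1}$. In the coupling above the two strings share their first $n$ labels and differ only in the last, and $y=y_{n+1}$ occurs with probability $1/\left|\mathbf{Y}\right|$; naively conditioning on $y\ne y_{n+1}$ inflates both Markov bounds by the factor $\left|\mathbf{Y}\right|/(\left|\mathbf{Y}\right|-1)$, which destroys the slack for small $\left|\mathbf{Y}\right|$. The heart of the proof is therefore to enforce distinctness of the two final labels \emph{without} spoiling either the exchangeability of the probe marginal or the product structure of the true marginal. These are genuinely in tension: an exchangeable law carrying $E\ge c\left|\mathbf{Y}\right|$ must be far from every mixture of products, whereas the shared prefix forces it to agree with a product in all $n$-dimensional marginals, so the non-productness can live only in the joint $(n{+}1)$st-order structure that also controls the last label. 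I expect this reconciliation — together with the bookkeeping that keeps the total excluded probability below $1$ for every fixed $\left|\mathbf{Y}\right|$ once $n$ is taken large — to be where the real difficulty lies; by comparison, checking $E\in\ER$ after the degeneracy fix is a secondary technical point.
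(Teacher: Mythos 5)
Your strategy coincides with the paper's: the same indicator-type $E$ supported on $\{\textstyle\sum_i y_i\equiv0\Mod{\left|\mathbf{Y}\right|}\}$ intersected with a ``spread'' event, the same coupling (IID uniform $Y_1,\dots,Y_n$, probe label $Y:=-\sum_{i\le n}Y_i$, independent uniform true label $Y_{n+1}$), and the same pair of Markov bounds exploiting $1/2+1/2.01<1$. Two points, however, keep the proposal from being a proof. The first is your robustification of $E$: the condition ``no label occupies more than a $(1-\eta)$ fraction'' does \emph{not} make $\sum_i y_i$ nearly uniform modulo $\left|\mathbf{Y}\right|$. Take $\left|\mathbf{Y}\right|=4$ and $Q$ uniform on $\{0,2\}$: no label ever exceeds a $0.6$ fraction (with overwhelming probability), yet the residue lives on the subgroup $\{0,2\}$ and equals $0$ with probability tending to $1/2$, so $\int E\dd Q^{n+1}\to2$ and your $E$ is not a randomness e-variable. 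Equidistribution of the residue requires the well-represented labels to generate $\Z_{\left|\mathbf{Y}\right|}$ additively; the paper enforces this by demanding that \emph{every} label count be within $10\%$ of $n/\left|\mathbf{Y}\right|$ and then invoking the Horton--Smith/Dvoretzky--Wolfowitz uniformity theorem. Relatedly, you set $E=\left|\mathbf{Y}\right|$ on its support, which leaves no room for the $1+o(1)$ in your own estimate; the theorem only asks for $E\ge c\left|\mathbf{Y}\right|$ with $c<1$, and the paper uses the value $c\left|\mathbf{Y}\right|$ precisely so that a probability bound of order $c^{-1/2}/\left|\mathbf{Y}\right|$ suffices.

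The second and larger issue is that you explicitly decline to prove the step you yourself call ``the heart of the proof'': enforcing $y\ne y_{n+1}$. You correctly note that adding $\P(Y=Y_{n+1})=1/\left|\mathbf{Y}\right|$ to the union bound destroys the slack unless $\left|\mathbf{Y}\right|$ is quite large (one needs $1/\left|\mathbf{Y}\right|<1-1/2-1/2.01-0.001\approx0.0015$), and you then describe a ``tension'' to be reconciled without supplying any argument. As written, your proposal therefore proves the statement only for $\left|\mathbf{Y}\right|$ in the several hundreds, not for all $\left|\mathbf{Y}\right|>1$. For comparison, the paper's own proof does not treat this as a separate obstacle at all: it simply takes a point in the intersection of the three events \eqref{eq:Y-G}--\eqref{eq:Y-E}. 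So you have put your finger on a place where the published argument is terse, but identifying a difficulty is not the same as overcoming it; a complete write-up must either close this case (e.g., by exploiting the freedom to choose $y_{n+1}$ given $y_1,\dots,y_n$, since $G$ does not depend on $y$ and $E,E'$ do not depend on $y_{n+1}$) or restrict the claim.
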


To compare Theorem~\ref{thm:lower-2} with Theorem~\ref{thm:lower-1},
notice that \eqref{eq:system-G}--\eqref{eq:system-E} imply
\begin{equation}
  \notag
  G(z_1,\dots,z_n,z_{n+1})
  <
  \frac{4.02 c^{-1}}{\left|\mathbf{Y}\right|}
  \frac{E(z_1,\dots,z_n,x_{n+1},y)}{E'(z_1,\dots,z_n,x_{n+1},y)}.
\end{equation}
Therefore, Theorem~\ref{thm:lower-2} implies Theorem~\ref{thm:lower-1},
but only if we ignore the constant factor $4.02\e<11$.

\begin{proof}[Proof sketch of Theorem~\ref{thm:lower-2}]
  Let us ignore the objects and set,
  without loss of generality (assuming our prediction problem is classification),
  $\mathbf{Y}:=\{0,\dots,\left|\mathbf{Y}\right|-1\}$.
  Generate labels $Y_1,\dots,Y_{n+1}$ randomly
  (independently from the uniform distribution on $\mathbf{Y}$),
  and set $Y\equiv-Y_1-\dots-Y_n \Mod{\left|\mathbf{Y}\right|}$;
  capital letters are used to emphasize that the labels (elements of $\mathbf{Y}$)
  are random.
  The idea is to prove that the three events
  \begin{align}
    G(Y_1,\dots,Y_n,Y_{n+1})
    &\le
    2,
    \label{eq:Y-G}\\
    E'(Y_1,\dots,Y_n,Y)
    &\le
    2.01,
    \label{eq:Y-E'}\\
    E(Y_1,\dots,Y_n,Y)
    &\ge
    c\left|\mathbf{Y}\right|
    \label{eq:Y-E}
  \end{align}
  (cf.\ \eqref{eq:system-G}--\eqref{eq:system-E}, respectively)
  hold with probability at least 0.5, 0.502, and 0.999, respectively.
  This will imply the statement of the theorem as their intersection
  will be nonempty.

  Since $Y_1,\dots,Y_{n+1}$ are IID and $Y_1,\dots,Y_n,Y$ are exchangeable,
  the probabilities of \eqref{eq:Y-G} and \eqref{eq:Y-E'} are bounded
  by Markov's inequality.
  And we can define $E$ to ensure that \eqref{eq:Y-E} holds with a small probability
  since $Y_1,\dots,Y_n,Y$ are not IID:
  were they IID,
  we would typically expect their sum to be divisible by $\left|\mathbf{Y}\right|$
  with probability close to $1/\left|\mathbf{Y}\right|$ for a large $n$.

  Further details are given in Sect.~\ref{subapp:proof-lower-2}.
\end{proof}

\section{Conclusion}
\label{sec:conclusion}

This paper gives explicit statements,
not involving any unspecified constants,
of universality of conformal predictors.
Namely, for each train-invariant randomness predictor
there is a conformal predictor that is competitive with it.
Some constants is these statements have been shown to be optimal.

These are some directions of further research:
\begin{itemize}
\item
  This paper shows that the attainable improvement on conformal prediction
  under the assumption of randomness is limited,
  such as a factor of $\e$ in the e-values for false labels
  (see, e.g., \eqref{eq:binary}).
  Can we develop practically useful predictors
  exploiting such potential improvements?
  (For a toy example, see Remark~\ref{rem:toy}
  in Sect.~\ref{subapp:proof-anti-Kolmogorov}.)
\item
  Can we connect $\PtR$ and $\PtX$ directly
  (in the spirit of Corollary~\ref{cor:p-t}),
  without a detour via e-values?
\item
  A related remark is that all our optimality results
  (Theorems~\ref{thm:anti-Kolmogorov}, \ref{thm:lower-1}, and~\ref{thm:lower-2})
  cover only e-prediction.
  It would be ideal to have similar optimality results
  for Corollary~\ref{cor:p-t} or its stronger versions.
\item
  The assumption of randomness is very strong,
  and there has been extensive work devoted to relaxing this assumption;
  see, e.g., \cite{Tibshirani/etal:2019} and \cite[Chap.~7]{Angelopoulos/etal:arXiv2411}.
  To what degree do the results of this paper carry over to more general settings?
\end{itemize}

\iftoggle{TR}{%
  \subsection*{Acknowledgments}

  In October 2024 Vladimir V'yugin,
  one of my teachers in the algorithmic theory of randomness,
  died at the age of 75.
  I am deeply grateful to him for his support and encouragement
  from the time when I first met him, in 1980, until the end of his life.
  The research reported in this paper is dedicated to his memory.%
}{}%

\appendix
\section{Proofs}
\label{app:proofs}

This appendix gives detailed proofs of all results stated in the main paper.
The \emph{Bernoulli model} is defined as the statistical model
$(B_{\theta}^{n+1}:\theta\in[0,1])$,
where $B_{\theta}$ is the \emph{Bernoulli measure} on $\{0,1\}$,
defined by $B_{\theta}(\{1\}):=\theta\in[0,1]$.

\subsection{Proof of Theorem~\ref{thm:Kolmogorov}}
\label{subapp:proof-Kolmogorov}

  We will define $G$ as $G_2G_3$,
  where $G_2\in\EiR$ and $G_3\in\EX$
  (it is obvious that these two inclusions will imply $G\in\ER$).
  First we define an approximation $G_1$ to $G_2$ as
  \[
    G_1(z_1,\dots,z_{n+1})
    :=
    \frac{1}{n+1}
    \sum_{i=1}^{n+1}
    \int E\i(z_1,\dots,z_{i-1},x_i,y,z_{i+1},\dots,z_{n+1})
    B(\dd y\mid z_i).
  \]
  In other words, $G_1(z_1,\dots,z_{n+1})$ is obtained
  by randomly (with equal probabilities)
  choosing an example $z_i$ in the data sequence $z_1,\dots,z_{n+1}$,
  replacing its label $y_i$ by a random label $y\sim B(\cdot\mid z_i)$,
  and finding the expectation of $E\i$ on $z_1,\dots,z_{n+1}$ modified in this way.
  We can see that $G_1$ is invariant,
  but it does not have to be in $\EiR$.
  The invariant randomness e-variable $G_2$ is defined similarly,
  except that now we replace each label $y_i$, $i=1,\dots,n+1$,
  by a random label $y\sim B(\cdot\mid z_i)$
  with probability $\frac{1}{n+1}$ (all independently).
  The key observation is that $G_2/G_1\ge1/\e$,
  which follows from the probability that exactly one label will be changed
  in the construction of $G_2$ being
  \[
    (n+1)
    \frac{1}{n+1}
    \left(\frac{n}{n+1}\right)^n
    \ge
    1/\e.
  \]
  Finally, $G_3\in\EX$ is defined by
  \[
    G_3(z_1,\dots,z_{n+1})
    :=
    \frac
      {\int E\i(z_1,\dots,z_n,x_{n+1},y) B(\d y\mid z_{n+1})}
      {G_1(z_1,\dots,z_{n+1})}.
  \]
  Combining all these statements,
  we get the following randomness e-variable $G'$:
  \begin{align}
    G'(z_1,\dots,z_{n+1})
    &:=
    G_2(z_1,\dots,z_{n+1})
    G_3(z_1,\dots,z_{n+1})\notag\\
    &\ge
    \e^{-1}
    G_1(z_1,\dots,z_{n+1})
    G_3(z_1,\dots,z_{n+1})\notag\\
    &=
    \e^{-1}
    \int E\i(z_1,\dots,z_n,x_{n+1},y) B(\d y\mid z_{n+1}).
    \label{eq:Kolmogorov-chain}
  \end{align}
  By the definition \eqref{eq:X},
  \eqref{eq:Kolmogorov} is equal to the expression in \eqref{eq:Kolmogorov-chain},
  and so $G'\in\ER$ implies that \eqref{eq:Kolmogorov} is also a randomness e-variable.

\subsection{Proof of Theorem~\ref{thm:anti-Kolmogorov}}
\label{subapp:proof-anti-Kolmogorov}

  Without loss of generality we assume $\left|\mathbf{X}\right|=1$
  (so that the objects become uninformative and we can omit them from our notation)
  and $\mathbf{Y}=\{-1,1\}$ (with the discrete $\sigma$-algebra).
  Define a randomness e-variable $E$ by
  \begin{equation}\label{eq:E-simple}
    E(y_1,\dots,y_{n+1})
    :=
    \begin{cases}
      \left(
        1-\frac{1}{n+1}
      \right)^{-n} & \text{if $k=1$}\\
      0 & \text{if not},
    \end{cases}
  \end{equation}
  where $k$ is the number of 1s among $y_1,\dots,y_{n+1}$.
  This is indeed a randomness e-variable,
  since the maximum probability of $k=1$ in the Bernoulli model,
  $(n+1)p(1-p)^n\to\max$,
  is attained at $p=\frac{1}{n+1}$.
  The corresponding exchangeability e-variable is
  \[
    E\X(y_1,\dots,y_{n+1})
    =
    \begin{cases}
      1 & \text{if $k=1$}\\
      0 & \text{if not}.
    \end{cases}.
  \]
  Both $E$ and $E\X$ are train-invariant (and even invariant).
  Let $B$ just flip the label: $B(\{-y\}\mid y)=1$.
  Suppose Theorem~\ref{thm:Kolmogorov} holds with the $\e^{-1}$ in \eqref{eq:Kolmogorov}
  replaced by $c>\e^{-1}$.
  Then the randomness e-variable $G$ satisfies
  \[
    G(0,\dots,0)
    =
    c
    \left(
      1-\frac{1}{n+1}
    \right)^{-n}
    \sim
    c\e
    >
    1,
  \]
  which is impossible for a large enough $n$
  (since the probability measure concentrated on $(0,\dots,0)$
  is of the form $Q^{n+1}$).

\begin{remark}\label{rem:toy}
  Whereas the randomness e-variable $E$ defined by \eqref{eq:E-simple}
  is all we need to prove Theorem~\ref{thm:anti-Kolmogorov},
  it is not useful for prediction.
  A variation on \eqref{eq:E-simple} that can be used in prediction is
  \begin{equation*}
    E(y_1,\dots,y_{n+1})
    :=
    \begin{cases}
      (n+1)
      \left(
        1-\frac{1}{n+1}
      \right)^{-n} & \text{if $(y_1,\dots,y_n,y_{n+1})=(0,\dots,0,1)$}\\
      0 & \text{if not}.
    \end{cases}
  \end{equation*}
  According to this randomness e-predictor,
  after observing $n$ 0s in a row,
  we are likely to see 0 rather than 1.
  This is a version of Laplace's rule of succession.
  While under randomness we have $E(0,\dots,0,1)\sim\e n$,
  under exchangeability we can only achieve $E\X(0,\dots,0,1)=n+1\sim n$.
\end{remark}

\subsection{Proof of Corollary~\ref{cor:p-t}}
\label{subapp:proof-p-t}

  Fix $\delta\in(0,1)$ and $P\in\PR$.
  Set $E:=\delta P^{\delta-1}$ and $P':=1/E\X$,
  so that $E\in\ER$ and $P'\in\PX$.
  According to \eqref{eq:Kolmogorov},
  \begin{align*}
    G(z_1,\dots,z_n,z_{n+1})
    &:=
    \e^{-1}
    \int
    \frac
      {E(z_1,\dots,z_n,x_{n+1},y)}
      {E\X(z_1,\dots,z_n,x_{n+1},y)}\;
    B(\d y\mid z_{n+1})\\
    &=
    \e^{-1}
    \int
    \frac
      {\delta P'(z_1,\dots,z_n,x_{n+1},y)}
      {P^{1-\delta}(z_1,\dots,z_n,x_{n+1},y)}\;
    B(\d y\mid z_{n+1})
  \end{align*}
  is a randomness e-variable.

\subsection{Proof of Theorem~\ref{thm:lower-1}}
\label{subapp:proof-lower-1}

  The examples $z_1,\dots,z_{n+1}$
  whose existence is asserted in the statement of the theorem
  and which will be constructed in this proof
  will all share the same fixed object $x_0\in\mathbf{X}$,
  and we will sometimes omit ``$x_0$'' in our notation.
  Suppose, without loss of generality,
  that the label set $\mathbf{Y}$ is the disjoint union of $\{0,1\}$
  and the set $\{-k,\dots,k\}$ for some positive integer $k$;
  to distinguish between the 0s and the 1s in these two disjoint sets,
  let us write $0'$ and $1'$ for the elements of the first set,
  $\{0,1\}=\{0',1'\}$.
  (The primes are ignored, of course,
  when $0'$ and $1'$ are used as inputs to arithmetic operations,
  as in \eqref{eq:y} below.
  If $\left|\mathbf{Y}\right|$ is an even number,
  we can leave one of its elements unused.)
  To avoid trivial complications, let $n$ be an even number.
  This proof will assume $1\ll k\ll\sqrt{n}$;
  the formal meaning of this assumption will be summarized at the end of the proof.

  Define $E\in\EtR$ (which ignores the objects) as follows:
  \begin{itemize}
  \item
    on the sequences in $\mathbf{Z}^{n+1}$ of the form
    $(x_1,y_1,\dots,x_n,y_n,x_{n+1},y)$,
    where $y_1,\dots,y_n\in\{0',1'\}$, $y\in\{-k,\dots,k\}$, and
    \begin{equation}\label{eq:y}
      \sum_{i=1}^n y_i - n/2 = y,
    \end{equation}
    $E$ takes value $a\e\sqrt{\pi/2}n^{3/2}$,
    where $a<1$ is a positive constant
    (it will be taken close to 1 later in the proof);
  \item
    $E$ takes value 0 on all other sequences in $\mathbf{Z}^{n+1}$.
  \end{itemize}
  In \eqref{eq:y}, $y$ is determined by $y_1+\dots+y_n$ and vice versa.
  This agrees with the idea of the proof of Theorem~\ref{thm:lower-1}
  given after its statement:
  since $y_1,\dots,y_n$ are binary,
  $y_1+\dots+y_n$ carries the same information as $\lbag y_1,\dots,y_n\rbag$.

  Let us check that $E$ is indeed a train-invariant randomness e-variable.
  Let the underlying probability space be $\mathbf{Z}^{n+1}$
  equipped with a probability measure $R=Q^{n+1}$,
  so that individual examples are generated independently from $Q$.
  We will use the notation $Z_i$, $i=1,\dots,n+1$, for $z_i$ considered as a random example
  (formally, $Z_i$ is the random element defined by $Z_i(z_1,\dots,z_{n+1}):=z_i$)
  and the notation $Y_i$, $i=1,\dots,n+1$, for the label of $Z_i$.
  The maximum probability of the event $E(Z_1,\dots,Z_{n+1})>0$
  is attained for $Q$ giving maximum probabilities to the following two events:
  \begin{enumerate}
  \item\label{it:max-1}
    The random labels $Y_1,\dots,Y_{n}$ take values in $\{0',1'\}$,
    and the remaining random label $Y_{n+1}$ takes value in $\{-k,\dots,k\}$.
  \item\label{it:max-2}
    Conditionally on the first event, we have \eqref{eq:y},
    where $y$ is the value taken by $Y_{n+1}$
    and $y_1,\dots,y_n$ are the values taken by $Y_1,\dots,Y_n$.
  \end{enumerate}
  The maximum probability of the first event (in item~\ref{it:max-1}) is
  \begin{equation*} 
    \left(
      1-\frac{1}{n+1}
    \right)^n
    \frac{1}{n+1}
    \sim
    \frac{1}{\e n},
  \end{equation*}
  which is obtained by maximizing $(1-\theta)^n\theta$ over $\theta$
  and where the asymptotic equivalence is as $n\to\infty$.
  The maximum probability of the second event (in item~\ref{it:max-2})
  conditional on both the first event and $Y_{n+1}=y\in\{-k,\dots,k\}$ 
  is asymptotically equivalent,
  by the local limit theorem \cite[Sect.~1.6]{Shiryaev:2016}, to
  \[
    \frac{1}{\sqrt{2\pi n\left(\frac12+\frac{y}{n}\right)\left(\frac12-\frac{y}{n}\right)}}
    =
    \frac{1}{\sqrt{2\pi n\left(\frac14-\frac{y^2}{n^2}\right)}}
    \sim
    \sqrt{\frac{2}{\pi n}};
  \]
  this follows from the random variables $Y_1,\dots,Y_{n}$
  being distributed as $B_{\theta}^n$,
  and the maximum over $\theta$ being attained at $\theta=1/2+y/n$.
  Since $a<1$, $E\in\EtR$ for a sufficiently large $n$.
  Notice that our argument in this paragraph only uses $k\ll n$,
  since in our application of the local limit theorem
  the exponential term $\exp(\dots)$ was~1.

  Let us now generate $Y_1,\dots,Y_{n+1}$ from the Bernoulli model
  and find the maximum probability that
  \begin{equation}\label{eq:condition}
    \sum_{i=1}^n Y_i - n/2
    \in
    \{-k,\dots,k\}
  \end{equation}
  (so that this condition does not involve $Z_{n+1}$).
  Again by the local limit theorem,
  the maximum probability is asymptotically equivalent to
  \[
    \frac{2k+1}{\sqrt{2\pi n/4}}
    \sim
    \frac{2\sqrt{2}k}{\sqrt{\pi n}};
  \]
  it is attained at $\theta:=1/2$.
  Now the assumption $1\ll k\ll\sqrt{n}$ is essential
  in order for the exponential term in the local limit theorem to go away.
  Therefore, for any $G\in\ER$, there are $y_1,\dots,y_{n+1}\in\{0',1'\}$ such that
  \begin{equation*} 
    G(y_1,\dots,y_{n+1})
    \le
    \frac{b\sqrt{\pi n}}{2\sqrt{2}k}
  \end{equation*}
  (remember that we are omitting $x_0$),
  where $b>1$ is a constant
  (to be chosen close to $1$ later),
  and
  \begin{equation*} 
    \sum_{i=1}^n y_i - n/2
    \in
    \{-k,\dots,k\}
  \end{equation*}
  (cf.~\eqref{eq:condition}).
  Fix such $y_1,\dots,y_{n+1}$ and set
  \begin{equation*} 
    y := \sum_{i=1}^n y_i - n/2
  \end{equation*}
  (cf.~\eqref{eq:y}).
  Taking $a$ and $b$ sufficiently close to 1,
  we obtain
  \begin{equation}\label{eq:to-explain}
    \frac{G(y_1,\dots,y_{n+1})}{E\i(y_1,\dots,y_n,y)}
    =
    (n+1)
    \frac{G(y_1,\dots,y_{n+1})}{E(y_1,\dots,y_n,y)}
    \le
    \frac{b\sqrt{\pi n}(n+1)}{2\sqrt{2}k a\e\sqrt{\pi/2}n^{3/2}}
    <
    \frac{c}{\e\left|\mathbf{Y}\right|}
  \end{equation}
  for a sufficiently large $k$.
  The equality ``$=$'' in \eqref{eq:to-explain} follows from
  $\{0',1'\}$ and $\{-k,\dots,k\}$ being disjoint sets,
  the inequality ``$\le$'' follows from the definitions of $E$ and $G$,
  and the inequality ``$<$'' follows from $\left|\mathbf{Y}\right|=2k+3$.
  This proves Theorem~\ref{thm:lower-1} since $E\i=E/E\X$.

  Finally, the formal meaning of the condition $1\ll k\ll\sqrt{n}$
  is that the first sentence in the statement of Theorem~\ref{thm:lower-1}
  can be replaced by
  ``For each constant $c>1$ there is $C>0$
  such that the following statement holds true assuming
  $\left|\mathbf{Y}\right|\ge C$ and $\sqrt{n}\ge C\left|\mathbf{Y}\right|$.''

\subsection{Proof of Theorem~\ref{thm:lower-2}}
\label{subapp:proof-lower-2}

  This section spells out details omitted in the proof sketch
  given in Sect.~\ref{sec:classification}.
  As mentioned there,
  the probability of \eqref{eq:Y-G} being at least $0.5$
  and the probability of \eqref{eq:Y-E'} being at least $0.502$
  follow immediately from Markov's inequality.
  The notation $Y_1,\dots,Y_{n+1}$ was introduced formally in Sect.~\ref{subapp:proof-lower-1}
  (and used informally already in the proof sketch of Theorem~\ref{thm:lower-2}).

  Now let us define $E$:
  \begin{wideformula}\label{eq:EE}
    E(y_1,\dots,y_{n+1})
    :=
    {}\NarrowLineBreak
    \begin{cases}
      c\left|\mathbf{Y}\right|
        & \text{if $S\equiv0\Mod{\left|\mathbf{Y}\right|}$
          and $\forall y\in\mathbf{Y}:
          \bigl|k_y-n/\left|\mathbf{Y}\right|\bigr| \le 0.1 n/\left|\mathbf{Y}\right|$}\\
      0 & \text{otherwise},
    \end{cases}
  \end{wideformula}
  where $S$ is the sum of $y_1,\dots,y_{n+1}$
  and $k_y:=\left|\left\{i\in\{1,\dots,n+1\}\mid y_i=y\right\}\right|$
  is the number of times $y\in\mathbf{Y}$ occurs in the sequence $y_1,\dots,y_{n+1}$.
  We are required to prove two statements for a large enough $n$:
  first, that $E\in\ER$,
  and second, that the probability of $E(Y_1,\dots,Y_n,Y)>0$ is at least 0.999
  (see \eqref{eq:Y-E}).

  The second statement is easier.
  Let $y_1,\dots,y_{n+1}$ be the values taken by $Y_1,\dots,Y_n,Y$, respectively.
  Then the first condition
  \begin{equation}\label{eq:first}
    S\equiv0\Mod{\left|\mathbf{Y}\right|}
  \end{equation}
  in \eqref{eq:EE} holds automatically,
  and the second condition
  \begin{equation}\label{eq:second}
    \forall y\in\mathbf{Y}:
    \bigl|k_y-n/\left|\mathbf{Y}\right|\bigr| \le 0.1 n/\left|\mathbf{Y}\right|
  \end{equation}
  holds with probability that tends to 1 as $n\to\infty$ for each $y\in\mathbf{Y}$,
  by the central limit theorem.
  Since $\mathbf{Y}$ is a finite set, the second statement has been proved.

  Now let us prove the first statement.
  Let $Q$ be a probability measure on $\mathbf{Y}$
  and set $R:=Q^{n+1}$.
  We will denote by $A$ the event given after ``if'' in \eqref{eq:EE},
  namely the conjunction of \eqref{eq:first} and \eqref{eq:second}.
  It suffices to prove that asymptotically as $n\to\infty$ the $R$-probability of $A$
  does not exceed $c^{-1/2}/\left|\mathbf{Y}\right|$, uniformly in $Q$.
  Consider two cases:
  \begin{itemize}
  \item
    If $Q(\{y\})\ge0.1/\left|\mathbf{Y}\right|$ for all $y\in\mathbf{Y}$,
    the convergence $R(A_1)\to1/\left|\mathbf{Y}\right|$ as $n\to\infty$
    for the superset $A_1$ of $A$ given by \eqref{eq:first}
    follows from the asymptotic uniformity result
    of \cite[Theorem~(5.01)]{Horton/Smith:1949}
    (proved independently in \cite[Theorem~2]{Dvoretzky/Wolfowitz:1951}).
    The convergence is uniform on this compact set of $Q$.
  \item
    If $Q(\{y\})\le0.1/\left|\mathbf{Y}\right|$ for some $y\in\mathbf{Y}$,
    the convergence $R(A_2)\to0$ as $n\to\infty$
    for the superset $A_2$ of $A$ given by \eqref{eq:second}
    follows from, e.g., the central limit theorem
    combined with the Bonferroni correction for the multiplicity of $y$.
    The convergence is again uniform on this compact set of $Q$.
  \end{itemize}
  Combining the two cases, we get $E\in\ER$.

\section{Reducing randomness predictors to conformal}
\label{app:general}

In the main paper we assumed the train-invariance of randomness predictors.
In this appendix we drop this assumption,
since statistical principles may go astray
(see, e.g., the criticism \cite{Vovk:SS-local} of the conditionality principle
from the vantage point of conformal prediction).
Besides, in some cases we may have convincing reasons to violate train-invariance,
such as using inductive conformal predictors \cite[Sect.~4.2]{Vovk/etal:2022book}
for the purpose of computational efficiency.

\begin{figure}[bt]
  \begin{center}
    \unitlength 0.50mm
    \begin{picture}(80,80)(-10,-10)  
      \thinlines
      \put(0,0){\line(1,0){70}}  
      \put(70,0){\line(0,1){70}} 
      \put(0,70){\line(1,0){70}} 
      \put(0,0){\line(0,1){70}}  
      \put(15,15){\line(1,0){40}} 
      \put(55,15){\line(0,1){40}} 
      \put(15,55){\line(1,0){40}} 
      \put(15,15){\line(0,1){40}} 
      \put(0,0){\line(1,1){15}}     
      \put(70,0){\line(-1,1){15}}   
      \put(70,70){\line(-1,-1){15}} 
      \put(0,70){\line(1,-1){15}}   
      \thicklines 
      \put(55,15){\red{\line(0,1){40}}} 
      \put(15,55){\red{\line(1,0){40}}} 
      \put(70,0){\red{\line(-1,1){15}}} 
      \put(0,70){\red{\line(1,-1){15}}} 
      \put(-4,74){\makebox(0,0)[cc]{$\PR$}}  
      \put(-5,-5){\makebox(0,0)[cc]{$\PtR$}} 
      \put(75,-5){\makebox(0,0)[cc]{$\PtX$}} 
      \put(75,75){\makebox(0,0)[cc]{$\PX$}}  
      \put(16,54){\makebox(0,0)[tl]{$\ER$}}  
      \put(16,16){\makebox(0,0)[bl]{$\EtR$}} 
      \put(54,16){\makebox(0,0)[br]{$\EtX$}} 
      \put(54,54){\makebox(0,0)[tr]{$\EX$}}  
    \end{picture}
    \qquad\qquad\qquad
    \unitlength 0.50mm
    \begin{picture}(80,80)(-10,-10)  
      \thinlines
      \put(64,0){\vector(-1,0){58}}  
      \put(70,5){\vector(0,1){60}}  
      \put(64,70){\vector(-1,0){58}} 
      \put(0,5){\vector(0,1){60}}   
      \put(49,15){\vector(-1,0){28}} 
      \put(55,20){\vector(0,1){30}}  
      \put(49,55){\vector(-1,0){28}} 
      \put(15,20){\vector(0,1){30}}  
      \put(5,5){\line(1,1){5}}     
      \put(65,5){\line(-1,1){5}}   
      \put(65,65){\line(-1,-1){5}} 
      \put(5,65){\line(1,-1){5}}   
      \put(-6,65){\framebox(12,10)[cc]{$\PR$}}  
      \put(-6,-5){\framebox(12,10)[cc]{$\PtR$}}  
      \put(64,-5){\framebox(12,10)[cc]{$\PtX$}} 
      \put(64,65){\framebox(12,10)[cc]{$\PX$}} 
      \put(9,50){\framebox(12,10)[cc]{$\ER$}}  
      \put(9,10){\framebox(12,10)[cc]{$\EtR$}} 
      \put(49,10){\framebox(12,10)[cc]{$\EtX$}} 
      \put(49,50){\framebox(12,10)[cc]{$\EX$}}  
    \end{picture}
  \end{center}
  \caption{A cube representing the main classes of predictors considered
    in Appendix~\ref{app:general}.
    The right panel shows embeddings as arrows,
    as in Figure~\ref{fig:simple}.
    \label{fig:classes}}
\end{figure}

The connections between various classes of predictors
(introduced in the main paper)
that we use or explore in this appendix are shown
in red in the left-hand panel of Figure~\ref{fig:classes}.
Namely, we are interested in connections between:
\begin{enumerate}[label=(\alph*)] 
\item\label{it:c1}
  randomness e-predictors and exchangeability e-predictors;
\item\label{it:c2}
  exchangeability e-predictors and conformal e-predictors;
\item\label{it:c3}
  randomness p-predictors and randomness e-predictors;
\item\label{it:c4}
  conformal e-predictors and conformal p-predictors.
\end{enumerate}
Theorem~\ref{thm:Kolmogorov} stated in Subsect.~\ref{subsec:e}
is general enough to cover connection~\ref{it:c1}.
Connection~\ref{it:c2} will be discussed in Sect.~\ref{subapp:t}.
Connections~\ref{it:c3} and~\ref{it:c4}
result from the possibility of converting p-values to e-values and back,
as discussed in Subsect.~\ref{subsec:p}.

In Sect.~\ref{subapp:putting} we will see how the connections
shown in red in Figure~\ref{fig:classes}
can be combined to demonstrate the universality of conformal prediction
without accepting the train-invariance principle.
Section~\ref{subapp:class} applies this to classification.
But we start in Sect.~\ref{subapp:operator-t}
by introducing an operator making randomness predictors
(in particular, exchangeability predictors)
train-invariant.

\subsection{Another operator}
\label{subapp:operator-t}

In addition to the operators \eqref{eq:i} and \eqref{eq:X},
we will need another operator,
\begin{equation}\label{eq:t}
  E\t(z_1,\dots,z_{n+1})
  :=
  \frac{1}{n!}
  \sum_{\sigma}
  E(z_{\sigma(1)},\dots,z_{\sigma(n)},z_{n+1}),
\end{equation}
$\sigma$ ranging over the permutations of $\{1,\dots,n\}$.
Now we have $E\t\in\EtX$ (resp.\ $E\t\in\EtR$)
whenever $E\in\EX$ (resp.\ $E\t\in\ER$).
Both operators~\eqref{eq:i} and~\eqref{eq:t} are kinds of averaging:
while $E\mapsto E\i$ averages over all permutations of an input data sequence
(including both training and test examples),
$E\mapsto E\t$ averages over the permutations of the training sequence only.
And while $E\mapsto E\i$ is polynomially computable
for train-invariant $E$,
$E\mapsto E\t$ requires exponential computation time in general.

Using two of the three operators \eqref{eq:i}, \eqref{eq:X}, and \eqref{eq:t},
we can turn any randomness e-variable $E$
to an exchangeability e-variable $E\X$
to a conformal e-variable $(E\X)\t$.
The following lemma shows that the order
in which the last two operators are applied does not matter.

\begin{lemma}\label{lem:commute}
  The operators ${}\t$ and ${}\X$ commute:
  for any $E\in\EEE$, $(E\t)\X=(E\X)\t$.
\end{lemma}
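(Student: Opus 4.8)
The plan is to reduce both sides to the common expression $E\t/E\i$ and then to check that the convention $0/0:=1$ does not spoil this. For a permutation $\rho$ of $\{1,\dots,n+1\}$ write $E_\rho(z_1,\dots,z_{n+1}):=E(z_{\rho(1)},\dots,z_{\rho(n+1)})$. First I would record the purely combinatorial identity $(E_\rho)\i=E\i$: expanding $(E_\rho)\i$ and reindexing the sum over all permutations by $\pi\mapsto\pi\rho$ (a bijection of the symmetric group on $\{1,\dots,n+1\}$) returns exactly the sum defining $E\i$. Since $E\t$ is, pointwise, the average $\tfrac1{n!}\sum_{\sigma}E_{\sigma'}$ over the permutations $\sigma'$ of $\{1,\dots,n+1\}$ that fix the last coordinate (where $\sigma'$ agrees with $\sigma$ on $\{1,\dots,n\}$), and since $\i$ is linear, this gives at once
\[
  (E\t)\i=\frac1{n!}\sum_{\sigma}(E_{\sigma'})\i=E\i .
\]

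Second, I would compute the right-hand side directly. The operator $\i$ is fully symmetric, so $E\i(z_1,\dots,z_{n+1})$ depends only on the bag $\lbag z_1,\dots,z_{n+1}\rbag$; in particular $E\i(z_{\sigma(1)},\dots,z_{\sigma(n)},z_{n+1})=E\i(z_1,\dots,z_{n+1})$ for every permutation $\sigma$ of $\{1,\dots,n\}$, since such a $\sigma$ leaves the bag unchanged. When this common value is positive it can be pulled out of the averaging defining $\t$, giving $(E\X)\t=E\t/E\i$, which together with the first step and $(E\t)\X=E\t/(E\t)\i$ yields $(E\t)\X=(E\X)\t$.

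The one point needing care is the convention $0/0:=1$ in the degenerate case $E\i(z)=0$. Because $E\ge0$, a vanishing average $E\i(z)$ forces $E$ to vanish on every permutation of the bag $\lbag z_1,\dots,z_{n+1}\rbag$, whence $E\t(z)=0$ as well; the left side then reads $(E\t)\X(z)=E\t(z)/(E\t)\i(z)=0/0=1$, while on the right every $E\X$ entering the average is itself of the form $0/0=1$, so $(E\X)\t(z)=1$ too. Thus the two sides agree pointwise in all cases. I expect this bookkeeping around $0/0$, rather than the algebra, to be the only real obstacle; the substance of the argument is the single observation that full symmetrization absorbs any partial averaging.
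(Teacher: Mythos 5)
Your proof is correct and follows essentially the same route as the paper's: the paper argues that $E$ and $E\X$ are proportional to each other as functions of a permutation of the fixed data sequence (the proportionality factor being $1/E\i$) and that ${}\t$ preserves this proportionality, which is exactly your explicit reduction of both sides to $E\t/E\i$ via the identities $(E\t)\i=E\i$ and the permutation-invariance of $E\i$. Your bookkeeping for the degenerate $0/0$ case is a welcome extra that the paper's one-line proof leaves implicit.
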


\begin{proof}
  Let us fix a data sequence $z_1,\dots,z_{n+1}$ and check
  \[
    (E\t)\X(z_1,\dots,z_{n+1})=(E\X)\t(z_1,\dots,z_{n+1}).
  \]
  As functions of a permutation of $z_1,\dots,z_{n+1}$,
  $E$ and $E\X$ are proportional to each other,
  and therefore, $E\t$ and $(E\X)\t$ are also proportional to each other.
  This implies $(E\t)\X=(E\X)\t$ on the permutations of $z_1,\dots,z_{n+1}$.
  And this is true for each $(z_1,\dots,z_{n+1})$.
\end{proof}

\noindent
We will let $\tX$ stand for the composition of the two operators:
\[
  E\tX
  :=
  (E\t)\X
  =
  (E\X)\t.
\]
Let us say that $E\in\EX$ is \emph{admissible}
if \eqref{eq:EX} always holds with ``$=$'' in place of ``$\le$''.
(This agrees with the standard notion of admissibility
in statistical decision theory.)
The intuition (which can be formalized easily)
behind the four operators that we have introduced in this paper is that:
\begin{itemize}
\item
  ${}\i$ projects $\ER$ onto $\EiR$;
  it also projects the admissible part of $\EX$ onto the identical 1;
\item
  ${}\X$ projects $\EEE$ onto the admissible part of $\EX$;
\item
  ${}\t$ projects $\EX$ onto $\EtX$ and $\ER$ onto $\EtR$;
\item
  ${}\tX$ projects $\EEE$ onto the admissible part of $\EtX$.
\end{itemize}
In particular, these operators are idempotent:
\[
  (E\i)\i = E\i,
  \quad
  (E\X)\X = E\X,
  \quad
  (E\t)\t = E\t,
  \quad
  (E\tX)\tX = E\tX
\]
(and we can allow any $E\in\EEE$ here).
Despite these operators being projections,
we cannot claim that these ways of moving between different classes of predictors
are always optimal.

Lemma~\ref{lem:commute} lists the only two cases
where the combination of two of our three basic operators
(${}\i$, ${}\X$, and ${}\t$)
gives something interesting.
The other four cases are:
\[
  (E\X)\i = (E\i)\X = 1,
  \quad
  (E\i)\t = (E\t)\i = E\i.
\]

\subsection{Efficiency of making predictors train-invariant}
\label{subapp:t}

To state our result in its strongest form,
we define a \emph{test-conditional exchangeability e-variable}
$G=G(z_1,\dots,z_n,z_{n+1})$
as an element of $\EEE$ satisfying
\[
  \forall(z_1,\dots,z_{n+1}) \; \forall\sigma:
  \frac{1}{n!}
  \sum_{\sigma}
  G(z_{\sigma(1)},\dots,z_{\sigma(n)},z_{n+1})
  \le
  1,
\]
$\sigma$ ranging over the permutations of $\{1,\dots,n\}$.
Such $G$ form a subclass of $\EX$ (and therefore, of $\ER$).

\begin{theorem}\label{thm:train-invariance}
  Let $B:\mathbf{Z}\hookrightarrow\mathbf{Y}$ be a Markov kernel.
  For each exchangeability e-predictor $E$,
  \begin{equation}\label{eq:thm-train-invariance}
    G(z_1,\dots,z_n,z_{n+1})
    :=
    \int
    \frac
      {E(z_1,\dots,z_n,x_{n+1},y)}
      {E\t(z_1,\dots,z_n,x_{n+1},y)}\;
    B(\d y\mid z_{n+1})
  \end{equation}
  (with $0/0$ interpreted as 0)
  is a test-conditional exchangeability e-variable.
\end{theorem}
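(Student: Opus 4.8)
The plan is to verify the defining inequality of a test-conditional exchangeability e-variable directly: I must show that for every data sequence $z_1,\dots,z_{n+1}$,
\[
  \frac{1}{n!}\sum_\sigma G(z_{\sigma(1)},\dots,z_{\sigma(n)},z_{n+1})\le1,
\]
with $\sigma$ ranging over the permutations of $\{1,\dots,n\}$. First I would fix such a sequence, write $z_{n+1}=(x_{n+1},y_{n+1})$, and note that the test example $z_{n+1}$ — and hence both the kernel $B(\cdot\mid z_{n+1})$ and the test object $x_{n+1}$ appearing in \eqref{eq:thm-train-invariance} — is untouched by $\sigma$. Since the integrand is nonnegative I may interchange the finite $\sigma$-sum with the integral against $B(\d y\mid z_{n+1})$, so that the quantity to bound becomes the $B(\cdot\mid z_{n+1})$-integral of the $\sigma$-average of the ratios $E/E\t$ evaluated at $(z_{\sigma(1)},\dots,z_{\sigma(n)},x_{n+1},y)$.

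The key step is to observe that the denominator is constant in $\sigma$. Indeed, $E\t$ averages its argument over all permutations of the first $n$ slots, and $(z_{\sigma(1)},\dots,z_{\sigma(n)})$ is merely a relabeling of $(z_1,\dots,z_n)$; hence $E\t(z_{\sigma(1)},\dots,z_{\sigma(n)},x_{n+1},y)=E\t(z_1,\dots,z_n,x_{n+1},y)=:D(y)$ does not depend on $\sigma$. Pulling $1/D(y)$ out of the $\sigma$-sum, the remaining numerator average $\frac{1}{n!}\sum_\sigma E(z_{\sigma(1)},\dots,z_{\sigma(n)},x_{n+1},y)$ is, by the very definition \eqref{eq:t} of the operator ${}\t$, again equal to $D(y)$. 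Thus the bracketed average telescopes to $D(y)/D(y)$.

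It then remains to read off this ratio, and the one point needing care is the $0/0$ convention. Where $D(y)=0$ every numerator term vanishes (a nonnegative average is zero only when all its terms are), so each summand is $0/0=0$ and the bracket is $0$; where $D(y)>0$ the bracket is $1$. Hence the integral collapses to $B(\{y:D(y)>0\}\mid z_{n+1})\le1$, which is the desired bound for this fixed sequence, and the argument holds verbatim for every $(z_1,\dots,z_{n+1})$.

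I do not anticipate a genuine obstacle: the whole content of the theorem is that $E\t$ plays two roles simultaneously — it is the $\sigma$-invariant denominator and also the $\sigma$-average of the numerator — so the proof is essentially a telescoping identity. The role of the hypothesis $E\in\EX$ is only bookkeeping: it forces $E$ to be finite (indeed $E\le(n+1)!$ pointwise, since each value occurs as the identity term of its own orbit average), which rules out the $\infty/\infty$ case not covered by the $0/0$ convention and legitimizes pulling $1/D(y)$ through the sum.
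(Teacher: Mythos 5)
Your proof is correct and follows essentially the same route as the paper's: interchange the $\sigma$-average with the integral against $B(\d y\mid z_{n+1})$, use the invariance of $E\t$ under permutations of the training slots, and recognize the $\sigma$-average of the numerators as $E\t$ itself, so the integrand collapses to a ratio that is at most $1$ pointwise. Your additional care with the $0/0$ convention and the pointwise finiteness of $E\in\EX$ only makes explicit what the paper's one-display computation leaves implicit.
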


The interpretation of \eqref{eq:thm-train-invariance}
is similar to that of \eqref{eq:Kolmogorov}.
It will be clear from the proof that we can allow $E$ to be any randomness e-predictor,
or even any element of $\EEE$.

\begin{proof}[Proof of Theorem~\ref{thm:train-invariance}]
  Let us check that the right-hand side of \eqref{eq:thm-train-invariance}
  is a test-conditional exchangeability e-variable:
  \begin{multline*}
    \frac{1}{n!}
    \sum_{\sigma}
    G(z_{\sigma(1)},\dots,z_{\sigma(n)},z_{n+1})
    \iftoggle{CONF}{}{\\}
    =
    \frac{1}{n!}
    \sum_{\sigma}
    \int
    \frac
      {E(z_{\sigma(1)},\dots,z_{\sigma(n)},x_{n+1},y)}
      {E\t(z_1,\dots,z_n,x_{n+1},y)}
    B(\d y\mid z_{n+1})\\
    =
    \int
    \frac
      {E\t(z_1,\dots,z_n,x_{n+1},y)}
      {E\t(z_1,\dots,z_n,x_{n+1},y)}
    B(\d y\mid z_{n+1})
    \le
    1.
    \qedhere
  \end{multline*}
\end{proof}

\subsection{Putting everything together}
\label{subapp:putting}

The following theorem combines Theorems~\ref{thm:Kolmogorov} and~\ref{thm:train-invariance}
and establishes a connection between randomness and conformal e-predictors,
without accepting the train-invariance principle.
Remember that the conformal e-predictor $\EtX$ derived from a randomness e-predictor $E$
is obtained by combining the operators \eqref{eq:X} and \eqref{eq:t},
i.e., as
\begin{equation}
  \notag
  E\tX(z_1,\dots,z_{n+1})
  :=
  (n+1)
  \frac
    {\sum_{\sigma}E(z_{\sigma(1)},\dots,z_{\sigma(n)},z_{n+1})}
    {\sum_{\pi}E(z_{\pi(1)},\dots,z_{\pi(n+1)})},
\end{equation}
$\sigma$ and $\pi$ ranging over the permutations
of $\{1,\dots,n\}$ and $\{1,\dots,n+1\}$, respectively.

\begin{corollary}\label{cor:e-main}
  Let $B:\mathbf{Z}\hookrightarrow\mathbf{Y}$ be a Markov kernel.
  For each randomness e-predictor $E$,
  \begin{equation}\label{eq:cor-e-main}
    G(z_1,\dots,z_n,z_{n+1})
    :=
    \e^{-1/2}
    \int
    \sqrt{\frac
      {E(z_1,\dots,z_n,x_{n+1},y)}
      {E\tX(z_1,\dots,z_n,x_{n+1},y)}}\;
    B(\d y\mid z_{n+1})
  \end{equation}
  is a randomness e-variable.
\end{corollary}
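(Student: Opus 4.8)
The plan is to obtain \eqref{eq:cor-e-main} as a \emph{geometric interpolation} between Theorems~\ref{thm:Kolmogorov} and~\ref{thm:train-invariance}, gluing the two together through the Cauchy--Schwarz inequality. Throughout I would suppress the arguments $(z_1,\dots,z_n,x_{n+1},y)$ and regard $E$, $E\X$, $E\tX$ as functions of $y$ with the remaining arguments fixed, so that the integral in \eqref{eq:cor-e-main} is against the probability measure $B(\cdot\mid z_{n+1})$.

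First I would split the integrand pointwise, using $E\tX=(E\X)\t$ (Lemma~\ref{lem:commute}), as
\[
  \sqrt{\frac{E}{E\tX}}
  =
  \sqrt{\frac{E}{E\X}}\,\sqrt{\frac{E\X}{E\tX}}.
\]
Applying Cauchy--Schwarz to the probability measure $B(\cdot\mid z_{n+1})$ then yields
\[
  \int\sqrt{\frac{E}{E\tX}}\,B(\d y\mid z_{n+1})
  \le
  \Bigl(\int\frac{E}{E\X}\,B(\d y\mid z_{n+1})\Bigr)^{\!1/2}
  \Bigl(\int\frac{E\X}{E\tX}\,B(\d y\mid z_{n+1})\Bigr)^{\!1/2}.
\]

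Next I would identify the two factors. By Theorem~\ref{thm:Kolmogorov}, the first integral equals $\e K$ for a randomness e-variable $K$. By Theorem~\ref{thm:train-invariance} applied to the exchangeability e-predictor $E\X\in\EX$ (whose ${}\t$-transform is $E\tX$, again by Lemma~\ref{lem:commute}), the second integral is a test-conditional exchangeability e-variable $H$, hence $H\in\ER$. Multiplying by $\e^{-1/2}$, the factor $\e^{1/2}$ coming from $\sqrt{\e K}$ cancels exactly, leaving $G\le\sqrt{K}\,\sqrt{H}=\sqrt{KH}$. This is where the specific constant $\e^{-1/2}$ earns its place: it is the geometric mean of the constants $\e^{-1}$ and $1$ of the two input theorems, which is precisely what makes the cancellation work.

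Finally I would close with the arithmetic--geometric mean inequality $\sqrt{KH}\le(K+H)/2$. Since the average of e-variables is again an e-variable, $(K+H)/2\in\ER$, and because $0\le G\le(K+H)/2$ we get $\int G\,\dd Q^{n+1}\le\int\tfrac{K+H}{2}\,\dd Q^{n+1}\le1$ for every probability measure $Q$ on $\mathbf{Z}$, i.e.\ $G\in\ER$. There is no deep obstacle here; the only point requiring care is the bookkeeping of the $0/0$ conventions where $E$, $E\X$, or $E\tX$ vanishes, so that the factorization and both invocations stay legitimate. That is routine, and the substantive content of the argument is the exact cancellation of the $\e^{-1/2}$ constant produced by the Cauchy--Schwarz step.
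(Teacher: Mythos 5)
Your proposal is correct and matches the paper's own proof essentially step for step: the same Cauchy--Schwarz split of $\sqrt{E/E\tX}$ into $\sqrt{E/E\X}\cdot\sqrt{E\X/E\tX}$, the same invocation of Theorems~\ref{thm:Kolmogorov} and~\ref{thm:train-invariance} (the latter applied to $E\X$ via Lemma~\ref{lem:commute}), and the same arithmetic--geometric mean step to land back in $\ER$. Nothing further is needed.
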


The main weakness of Corollary~\ref{cor:e-main} is the presence of the term $\e^{-1/2}$,
but it might be inevitable.

\begin{proof}
  Applying the Cauchy--Schwarz inequality, we have, for some $G_1,G_2,G_3\in\ER$,
  \begin{multline*}
    \e^{-1/2}
    \int
    \sqrt{\frac
      {E(z_1,\dots,z_n,x_{n+1},y)}
      {E\tX(z_1,\dots,z_n,x_{n+1},y)}}\;
    B(\d y\mid z_{n+1})\\
    =
    \e^{-1/2}
    \int
    \sqrt{\frac
      {E(z_1,\dots,z_n,x_{n+1},y)}
      {E\X(z_1,\dots,z_n,x_{n+1},y)}}
    \enspace
    \sqrt{\frac
      {E\X(z_1,\dots,z_n,x_{n+1},y)}
      {E\tX(z_1,\dots,z_n,x_{n+1},y)}}\;
    B(\d y\mid z_{n+1})\\
    \le
    \sqrt{
      \e^{-1}
      \int
      \frac
        {E(z_1,\dots,z_n,x_{n+1},y)}
        {E\X(z_1,\dots,z_n,x_{n+1},y)}\;
      B(\d y\mid z_{n+1})}
    \iftoggle{CONF}{\enspace}{\\\quad{}\times}
    \sqrt{
      \int
      \frac
        {E\X(z_1,\dots,z_n,x_{n+1},y)}
        {E\tX(z_1,\dots,z_n,x_{n+1},y)}
      B(\d y\mid z_{n+1})
    }\\
    =
    \sqrt{G_1(z_1,\dots,z_{n+1})G_2(z_1,\dots,z_{n+1})}
    \le
    G_3(z_1,\dots,z_{n+1})
  \end{multline*}
  (the existence of $G_1$ and $G_2$ follows
  from Theorems~\ref{thm:Kolmogorov} and~\ref{thm:train-invariance},
  respectively,
  and $G_3$ can be set, e.g., to the arithmetic average of $G_1$ and $G_2$).
\end{proof}

Similarly to Corollary~\ref{cor:p-t},
we can adapt Corollary~\ref{cor:e-main} to p-predictors.

\begin{corollary}\label{cor:p-main}
  Let $B:\mathbf{Z}\hookrightarrow\mathbf{Y}$ be a Markov kernel
  and let $\delta\in(0,1)$.
  For each randomness p-predictor $P$
  there exists a conformal p-predictor $P'$ such that
  \begin{equation}\label{eq:cor-p-main}
    G(z_1,\dots,z_n,z_{n+1})
    :=
    \sqrt{\frac{\delta}{\e}}
    \int
    \sqrt{\frac
      {P'(z_1,\dots,z_n,x_{n+1},y)}
      {P^{1-\delta}(z_1,\dots,z_n,x_{n+1},y)}}
    \:
    B(\d y\mid z_{n+1})
  \end{equation}
  is a randomness e-variable.
\end{corollary}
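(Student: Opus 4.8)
The plan is to mirror the proof of Corollary~\ref{cor:p-t}, replacing the appeal to Theorem~\ref{thm:Kolmogorov} by an appeal to Corollary~\ref{cor:e-main}. The overall idea is a \emph{calibration sandwich}: I convert the randomness p-predictor $P$ into a randomness e-predictor on the way in, invoke the e-prediction result, and convert the resulting conformal e-predictor back into a conformal p-predictor on the way out.

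First I would fix $\delta\in(0,1)$ and $P\in\PR$ and set $E:=\delta P^{\delta-1}$; since $p\mapsto\delta p^{\delta-1}$ is a p-to-e calibrator, this yields $E\in\ER$. Next I would form the conformal e-predictor $E\tX\in\EtX$ and define $P'$ by calibrating $E\tX$ back to a p-value through the optimal e-to-p calibrator $e\mapsto\min(e^{-1},1)$, i.e., $P':=\min(1/E\tX,1)$. Train-invariance of $P'$ is inherited from that of $E\tX$, and for any exchangeable $R$ and any $\alpha\in(0,1)$ we have $R(\{P'\le\alpha\})=R(\{E\tX\ge1/\alpha\})\le\alpha$ by Markov's inequality together with $E\tX\in\EX$; hence $P'\in\PtX$ is a genuine conformal p-predictor.

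The remaining step is a direct substitution into \eqref{eq:cor-p-main}. From $E=\delta/P^{1-\delta}$ we get $P^{1-\delta}=\delta/E$, and from $P'\le1/E\tX$ the integrand obeys $\sqrt{P'/P^{1-\delta}}\le\delta^{-1/2}\sqrt{E/E\tX}$. The prefactor $\sqrt{\delta/\e}$ then absorbs the $\delta^{-1/2}$ to leave exactly $\e^{-1/2}$, so that the $G$ of \eqref{eq:cor-p-main} is bounded above, pointwise, by the randomness e-variable furnished by Corollary~\ref{cor:e-main} for this $E$. Since any element of $\EEE$ dominated by a randomness e-variable is itself a randomness e-variable, the claim follows.

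The proof is genuinely short, so the only points needing care are bookkeeping. The truncation in $P'=\min(1/E\tX,1)$ is what keeps $P'$ inside $[0,1]$ and thus a bona fide p-variable; it only decreases $P'$, so it turns the reduction to Corollary~\ref{cor:e-main} into an inequality rather than an equality, which is harmless since domination is all we need. The $0/0$ and $\min$ conventions already attached to Corollary~\ref{cor:e-main} and to the e-to-p calibrator cover the degenerate cases $E\tX\in\{0,\infty\}$, so no separate argument for them is required.
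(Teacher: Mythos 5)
Your proposal is correct and follows essentially the same route as the paper: calibrate $P$ to $E:=\delta P^{\delta-1}\in\ER$, apply Corollary~\ref{cor:e-main} to $E$ and $E\tX$, and calibrate back to $P'$; the paper simply sets $P':=1/E\tX$ where you use the truncated $\min(1/E\tX,1)$, a harmless refinement that turns the final identification into a domination, which you handle correctly.
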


The interpretation of \eqref{eq:cor-p-main} is similar to that of \eqref{eq:cor-p-t}:
$P'(z_1,\dots,z_n,x_{n+1},y)$ is typically small when $P(z_1,\dots,z_n,x_{n+1},y)$ is small.
Instead of \eqref{eq:bounded-t} we will have
\begin{equation*} 
  \sqrt{\frac{\delta}{\e}}
  \sqrt{\frac
    {P'(z_1,\dots,z_n,x_{n+1},y)}
    {P^{1-\delta}(z_1,\dots,z_n,x_{n+1},y)}}
  <
  \frac{1}{\epsilon_1\epsilon_2},
\end{equation*}
which can be rewritten as
\[
  P'(z_1,\dots,z_n,x_{n+1},y)
  <
  \frac{\e}{\delta\epsilon_1^2\epsilon_2^2}
  P^{1-\delta}(z_1,\dots,z_n,x_{n+1},y)
\]
in place of~\eqref{eq:rewrite}.

\begin{proof}[Proof of Corollary~\ref{cor:p-main}]
  We proceed as in the proof of Corollary~\ref{cor:p-t}
  except for replacing \eqref{eq:Kolmogorov} by \eqref{eq:cor-e-main}.
\end{proof}

\subsection{Applications to classification}
\label{subapp:class}

As compared with Sect.~\ref{sec:classification},
we get similar but weaker performance guarantees for the derived conformal predictors
without accepting the train-invariance principle.
In the binary case $\mathbf{Y}=\{-1,1\}$, we apply \eqref{eq:cor-e-main}
to obtain
\begin{equation*}
  G(z_1,\dots,z_n,z_{n+1})
  :=
  \e^{-1/2}
  \sqrt{\frac
    {E(z_1,\dots,z_n,x_{n+1},-y_{n+1})}
    {E\tX(z_1,\dots,z_n,x_{n+1},-y_{n+1})}}
\end{equation*}
in place of \eqref{eq:binary}.
This implies \eqref{eq:epsilon} with $\EtX$ in place of $\EX$
and $\epsilon^2$ in place of $\epsilon$.

Instead of \eqref{eq:multi-class} now we have
\begin{equation*}
  G(z_1,\dots,z_n,z_{n+1})
  :=
  \frac{\e^{-1/2}}{\left|\mathbf{Y}\right|-1}
  \sum_{y\in\mathbf{Y}\setminus\{y_{n+1}\}}
  \sqrt{\frac
    {E(z_1,\dots,z_n,x_{n+1},y)}
    {E\tX(z_1,\dots,z_n,x_{n+1},y)}},
\end{equation*}
and instead of~\eqref{eq:epsilon-worst} we have
\begin{equation*}
  \forall y\in\mathbf{Y}\setminus\{y_{n+1}\}:
  E\tX(z_1,\dots,z_n,x_{n+1},y)
  >
  \frac{\e^{-1}\epsilon^2}{(\left|\mathbf{Y}\right|-1)^2}
  E(z_1,\dots,z_n,x_{n+1},y).
\end{equation*}
\end{document}